\newcommand{\fig}[1]{Figure~\ref{#1}}
\newcommand{\figref}[1]{Figure~\ref{#1}}
\newcommand{\xx}{\mathbf{x}}
\newcommand{\yy}{\boldsymbol{y}}
\newcommand{\ww}{\mathbf{w}}
\newcommand{\ignore}[1]{}
\newcommand{\secref}[1]{Section~\ref{#1}}
\newtheorem{prop}{Proposition}
\newcommand{\LL}{\mathcal{L}}
\newcommand{\metaVal}{{auxiliary set}}
\newcommand{\GuideNet}{{auxiliary network}}
\newcommand{\AuxiLearn}{AuxiLearn}
\newcommand\revision[1]{\textcolor{black}{#1}}
\newcommand\iclrrevision[1]{\textcolor{black}{#1}}
\title{Auxiliary Learning by Implicit \\Differentiation}
\author{%
	Aviv Navon\thanks{Equal contributor}\\
	Bar-Ilan University, Israel\\
	\texttt{aviv.navon@biu.ac.il} \\
	\And
	Idan Achituve$^*$\\
	Bar-Ilan University, Israel\\
	\texttt{idan.achituve@biu.ac.il} \\
	\And
	Haggai Maron\\
	NVIDIA, Israel\\
	\texttt{hmaron@nvidia.com} \\
	\AND
	Gal Chechik\thanks{Equal contributor}\\
	Bar-Ilan University, Israel\\
	NVIDIA, Israel\\
	\texttt{gal.chechik@biu.ac.il} \\
	\And
	Ethan Fetaya$^\dagger$\\
	Bar-Ilan University, Israel\\
	\texttt{ethan.fetaya@biu.ac.il} \\
}
\begin{document}
\maketitle

%%%%%%%%%%%%%%%%%%%%%%% Abstract %%%%%%%%%%%%%%%%%%%%%%% 

\begin{abstract}
Training neural networks with auxiliary tasks is a common practice for improving the performance on a main task of interest.
Two main challenges arise in this multi-task learning setting: (i) designing useful auxiliary tasks; and (ii) combining auxiliary tasks into a single coherent loss. Here, we propose a novel framework, \textit{\AuxiLearn{}}, that targets both challenges based on implicit differentiation. First, when useful auxiliaries are known, we propose learning a network that combines all losses into a single coherent objective function. This network can learn \textit{non-linear} interactions between tasks. Second, when no useful auxiliary task is known, we describe how to learn a network that generates a meaningful, novel auxiliary task. We evaluate \AuxiLearn{} in a series of tasks and domains, including image segmentation and learning with attributes in the low data regime, and find that it consistently outperforms competing methods.

\end{abstract}

%%%%%%%%%%%%%%%%%%%%%%% Intro %%%%%%%%%%%%%%%%%%%%%%% 
\section{Introduction}
\label{sec:introduction}
The performance of deep neural networks can significantly improve by training the main task of interest with additional auxiliary tasks
\citep{Goyal_2019_ICCV, jaderberg2016reinforcement, mirowski2019learning}.
For example, learning to segment an image into objects can be more accurate when the model is simultaneously trained to predict other properties of the image like pixel depth or 3D structure  \citep{standley2019tasks}. \revision{In the low data regime, models trained with the main task only are prone to overfit and  generalize poorly to unseen data \citep{vinyals2016matching}. In this case, the benefits of learning with multiple tasks are amplified \citep{zhang2017survey}.} Training with auxiliary tasks adds an inductive bias that pushes learned models to capture meaningful representations and avoid overfitting to spurious correlations.

%Unfortunately, training with auxiliaries in practice is very challenging for two main reasons. First, \textit{combining and selecting auxiliaries}. 
In some domains, it may be easy to design beneficial auxiliary tasks and collect supervised data. For example, numerous tasks were proposed for self-supervised learning in image classification, including masking \citep{doersch2015unsupervised}, rotation \citep{gidaris2018unsupervised} and patch shuffling \citep{doersch2017multi, noroozi2016unsupervised}. In these cases, it is not clear what would be the best way to combine all auxiliary tasks into a single loss~\citep{doersch2017multi}. The common practice is to compute a weighted combination of pretext losses by tuning the weights of individual losses using hyperparameter grid search. This approach, however, limits the potential of learning with auxiliary tasks because the run time of grid search grows exponentially with the number of tasks.%, and because combining losses is limited to their linear combinations.

In other domains, 
%The second major 
obtaining good auxiliaries in the first place may be challenging or may require expert knowledge. 
%In many domains, it is not clear which auxiliary tasks could be beneficial. 
For example, for point cloud classification, few self-supervised tasks have been proposed; however, their benefits so far are limited  \citep{achituve2020self,hassani2019unsupervised, sauder2019self, tang2020improving}. For these cases, it would be beneficial to automate the process of generating auxiliary tasks without domain expertise. 

% our approach paragraph
Our work takes a step forward in automating the use and design of auxiliary learning tasks. We name our approach \textit{\AuxiLearn{}}. \AuxiLearn{} leverages recent progress made in implicit differentiation for optimizing hyperparameters~\citep{liao18, lorraine2019optimizing}. 
We demonstrate the effectiveness of \AuxiLearn{} in two types of problems. First, in \textbf{combining auxiliaries}, for cases where auxiliary tasks are predefined. We describe how to train a deep neural network (NN) on top of auxiliary losses and combine them non-linearly into a unified loss. For instance, we combine per-pixel losses in image segmentation tasks using a convolutional NN (CNN). Second, \textbf{designing auxiliaries}, for cases where predefined auxiliary tasks are not available. We present an approach for learning such auxiliary tasks without domain knowledge and from input data alone. % Our approach follows the observation made by \citep{liu2019self} that defining a task is equivalent to defining a set of task labels. 
This is achieved by training an auxiliary network to generate auxiliary labels while training another, primary network to learn both the original task and the auxiliary task. One important distinction from previous works, such as \citep{kendall2018multi, liu2019self}, is that we do not optimize the auxiliary parameters using the training loss but rather on a separate (small) \emph{\metaVal{}}, allocated from the training data. This is a key difference since the goal of auxiliary learning is to improve generalization rather than help optimization on the training data.

% experiments paragraph
To validate our proposed solution, we extensively evaluate AuxiLearn in several tasks in the low-data regime. In this regime, the models suffer from severe overfitting and auxiliary learning can provide the largest benefits. Our results demonstrate that using \AuxiLearn{} leads to improved loss functions and auxiliary tasks, in terms of the performance of the resulting model on the main task. We complement our experimental section with two interesting theoretical insights regarding our model. The first shows that a relatively simple auxiliary hypothesis class may overfit. The second aims to understand which auxiliaries benefit the main task. 

To summarize, we propose a novel general approach for learning with auxiliaries using implicit differentiation. We make the following novel contributions: 
(a) \revision{We describe a unified approach for combining multiple loss terms and for learning novel auxiliary tasks from the data alone;}
(b) \revision{We provide a theoretical observation on the capacity of auxiliary learning;} 
(c) \revision{We show that the key quantity for determining beneficial auxiliaries is the Newton update;}
(d) We provide new results on a variety of auxiliary learning tasks \revision{with a focus on the low data regime}.
We conclude that implicit differentiation can play a significant role in automating the design of auxiliary learning setups.

%%%%%%%%%%%%%%%%%%%%%%% Related work %%%%%%%%%%%%%%%%%%%%%%% 
\section{Related work}
\label{sec:related_work}
\noindent\textbf{Learning with multiple tasks.}\quad \revision{Multitask Learning (MTL) aims at simultaneously solving} multiple learning problems while sharing information across tasks. In some cases, MTL benefits the optimization process and improves task-specific generalization performance compared to single-task learning ~\citep{standley2019tasks}. In contrast to MTL, \revision{auxiliary learning aims at solving }
%in learning with auxiliary tasks, we wish to optimize a model for 
a single, main task, and the purpose of all other tasks is to facilitate the learning of the primary task. At test time, only the main task is considered. This approach has been successfully applied in multiple domains, including computer vision~\citep{zhang2014facial}, natural language processing~\citep{fan2017transfer, trinh2018learning}, and reinforcement learning~\citep{jaderberg2016reinforcement, lin2019adaptive}.

\noindent\textbf{Dynamic task weighting.}\quad When learning a set of tasks, the task-specific losses are combined into an overall loss. The way individual losses are combined is crucial because MTL-based models are sensitive to the relative weightings of the tasks \citep{kendall2018multi}. A common approach for combining task losses is in a linear fashion. When the number of tasks is small, task weights are commonly tuned with a simple grid search. However,
this approach does not extend to a large number of tasks, or a more complex weighting scheme. Several recent studies proposed scaling task weights using gradient magnitude \citep{chen2017gradnorm}, task uncertainty \citep{kendall2018multi}, or the rate of loss change \citep{liu2019end}.  \cite{sener2018multi} proposed casting the multitask learning problem as a multi-objective optimization. These methods assume that all tasks are equally important, and are less suited for auxiliary learning. \cite{du2018adapting} and \cite{lin2019adaptive} proposed to weight auxiliary losses using gradient similarity. However, these methods do not scale well with the number of auxiliaries and do not take into account interactions between auxiliaries. In contrast, we propose to learn from data how to combine auxiliaries, possibly in a non-linear manner.

%%%%%%%%%%%%%%%%%%%%%% AuxiLearn framework %%%%%%%%%%%%%%%%%%%%%%
\begin{figure}[!t]
    \centering
    \subfloat[Combining losses]{
    \includegraphics[width=0.44\linewidth]{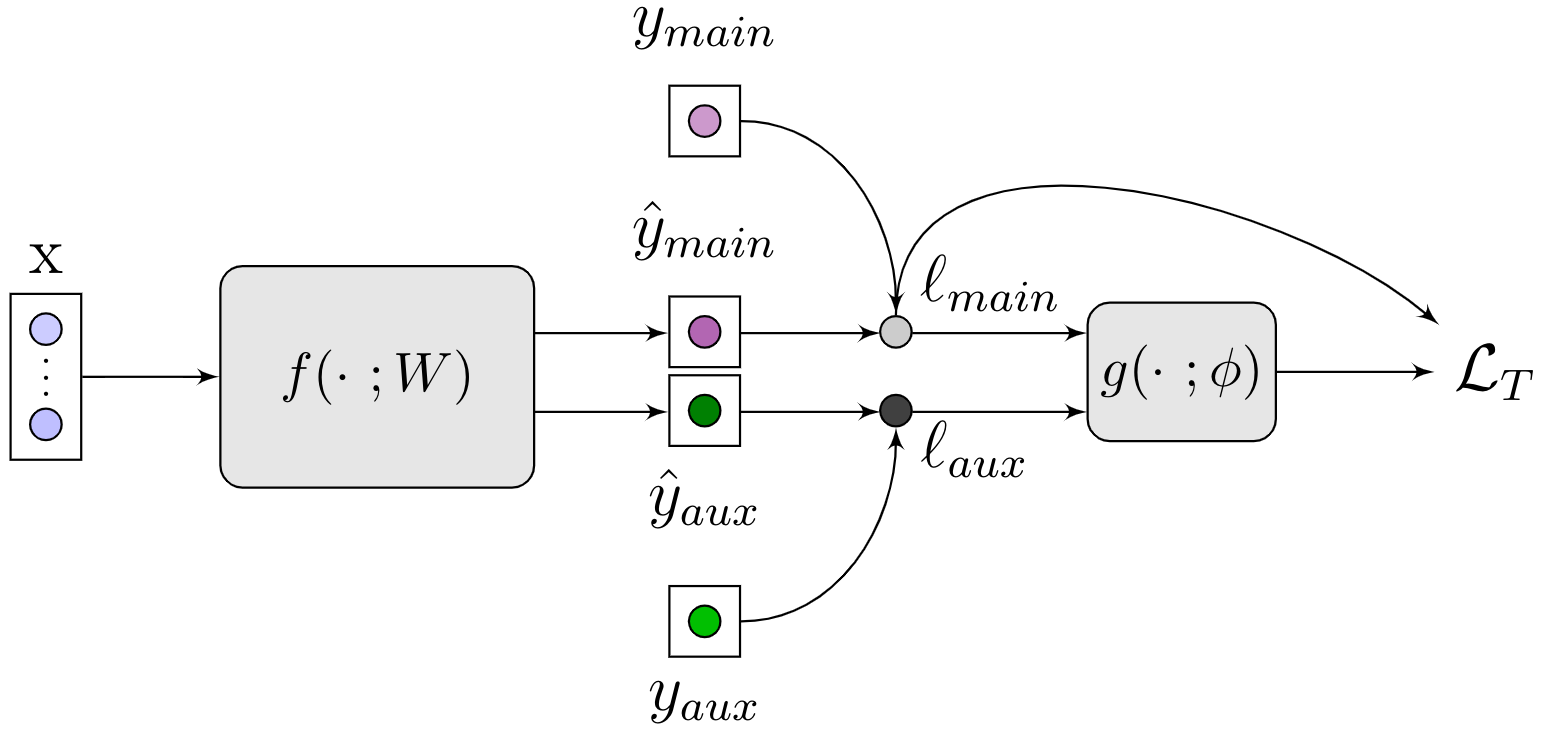}
    \label{fig:combining_losses}}
    \subfloat[Learning a new auxiliary task]{
    \hspace{40pt}
    \includegraphics[width=0.36\linewidth]{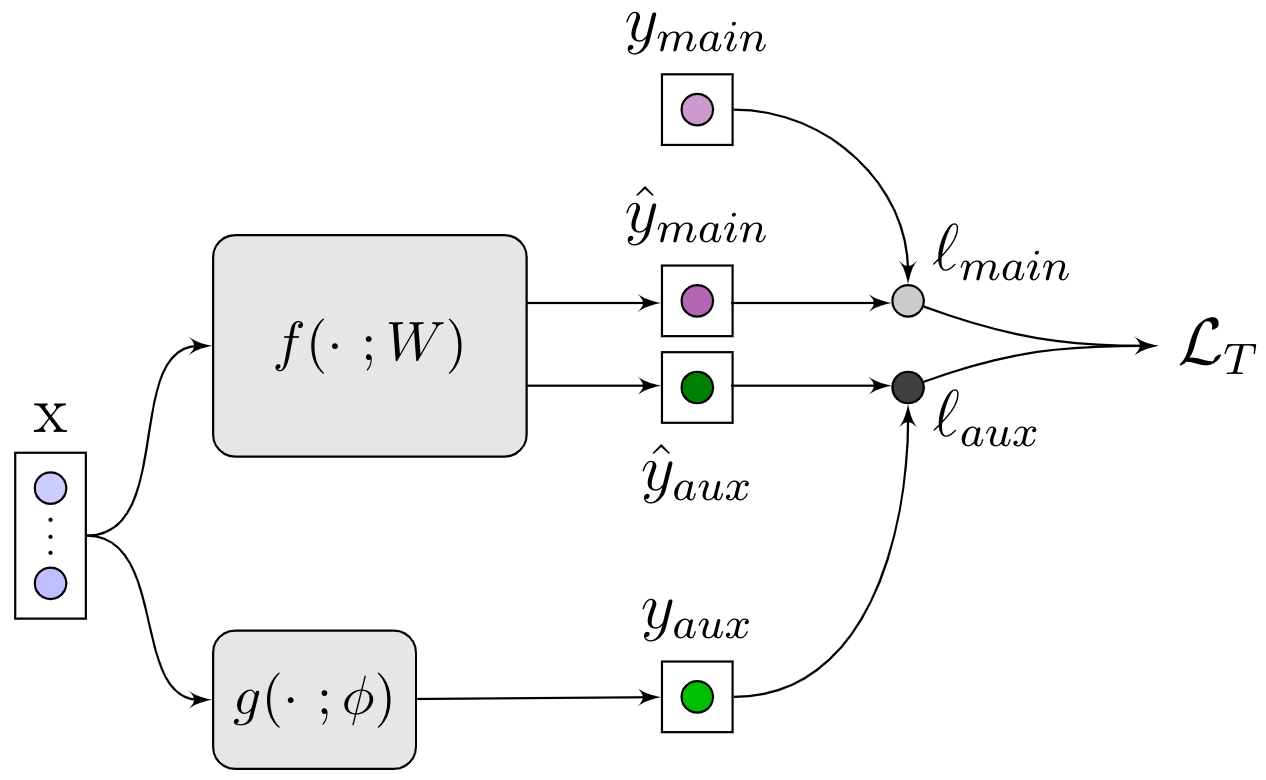}
    \label{fig:learn_new_aux}}
    \caption{
    The \AuxiLearn{} framework. \textbf{(a)} Learning to combine losses into a single coherent loss term. Here, the \GuideNet{} operates over a vector of losses. %\iclrrevision{Note that $\ell_{main}$ should be added to the output of $g(\cdot~;\phi)$ (not depicted in the figure for brevity).} 
    \textbf{(b)} Generating a novel auxiliary task. Here the \GuideNet{} operates over the input space. In both cases, $g(\cdot~;\phi)$ is optimized using IFT based on $\LL_A$.}
    \label{fig:GuideNet_arch}
\end{figure}

\noindent\textbf{Devising auxiliaries.}\quad Designing an auxiliary task for a given main task is challenging because it may require domain expertise and additional labeling effort. For self-supervised learning (SSL), many approaches have been proposed (see \cite{jing2020self} for a recent survey), but the joint representation learned through SSL may suffer from negative transfer and hurt the main task \citep{standley2019tasks}. \citet{liu2019self} proposed learning a helpful auxiliary in a meta-learning fashion, removing the need for handcrafted auxiliaries. However, their system is optimized for the training data, which may lead to degenerate auxiliaries. To address this issue, an entropy term is introduced to force the auxiliary network to spread the probability mass across classes.

\noindent\textbf{Implicit differentiation based optimization.}\quad Our formulation gives rise to a bi-level optimization problem. Such problems naturally arise in the context of meta-learning \citep{finn2017model, rajeswaran2019meta} and hyperparameter optimization \citep{bengio2000gradient, foo2008efficient, larsen1996design, liao18, lorraine2019optimizing, pedregosa2016hyperparameter}. The Implicit Function Theorem (IFT) is often used for computing gradients of the upper-level function, this operation requires calculating a vector-inverse Hessian product.
However, for modern neural networks, it is infeasible to calculate it explicitly, and an approximation must be devised. \citet{luketina2016scalable} proposed approximating the Hessian with the identity matrix, whereas \citet{foo2008efficient, pedregosa2016hyperparameter,rajeswaran2019meta} used conjugate gradient (CG) to approximate the product. Following \citet{liao18,lorraine2019optimizing}, we use a truncated Neumann series and efficient vector-Jacobian products, as it was empirically shown to be more stable than CG.

%%%%%%%%%%%%%%%%%%%%%%% Model %%%%%%%%%%%%%%%%%%%%%%%
\section{Our Method}
\label{sec:approach}
\vspace{-5pt}
We now describe the general \AuxiLearn{} framework for learning with auxiliary tasks. For that purpose, we use two networks, a primary network that is optimized on all tasks and an auxiliary network that is optimized on the main task only. First, we introduce our notations and formulate the general objective. Then, we describe two instances of this framework: combining auxiliaries and learning new auxiliaries. Finally, we present our optimization approach for both instances.

\subsection{Problem definition}
Let $\{(\xx_i^t,\yy_i^t) \}_i$ be the training set and $\{(\xx_i^a,\yy_i^a) \}_i$ be a distinct independent set which we term \textit{\metaVal}.
Let $f(\cdot~;W)$ denote the primary network, and let $g(\cdot~;\phi)$ denote the \GuideNet{}. Here, $W$ are the parameters of the model optimized on the training set, and $\phi$ are the auxiliary parameters trained on the \metaVal. The training loss is defined as:
\begin{equation}
    \label{eq:train_objective}
    \LL_T=\LL_T(W,\phi)=\sum_i\ell_{main}(\xx_i^t,\yy_i^t;W)+h(\xx_i^t,\yy_i^t,W;\phi),
\end{equation}
where $\ell_{main}$ denotes the loss of the main task and $h$ is the overall auxiliary loss, controlled by $\phi$. In Sections \ref{sec:combine_losses} \& \ref{sec:learning_new_auxiliries} we will describe two instances of $h$. We note that $h$ has access to both $W$ and $\phi$. The loss on the \metaVal{} is defined as $\LL_{A}=\sum_i\ell_{main}(\xx_i^a,\yy_i^a;W)$, since we are interested in the generalization performance of the main task.

We wish to find auxiliary parameters ($\phi$) such that the primary parameters ($W$), trained with the combined objective, generalize well. More formally, we seek
\begin{equation}
    \label{eq:bi_level}
    \phi^*=\arg\min_{\phi}\LL_A(W^*(\phi)),\quad \text{s.t.} \quad W^*(\phi)=\arg\min_W\LL_T(W,\phi).
\end{equation}

\subsection{Learning to combine auxiliary tasks}\label{sec:combine_losses}

Suppose we are given $K$ auxiliary tasks, usually designed using expert domain knowledge. We wish to learn how to optimally leverage these auxiliaries by learning to combine their corresponding losses. Let $\boldsymbol{\ell}(\xx,\yy;W)=(\ell_{main}(\xx,y^{main};W), \ell_1(\xx,y^{1};W), ...,\ell_K(\xx,y^{K};W))$ denote a loss vector. We wish to learn an \GuideNet{} $g:\mathbb{R}^{K+1}\to\mathbb{R}$ over the losses that will be added to $\ell_{main}$ in order to output the training loss $\LL_T=\ell_{main}+g(\boldsymbol{\ell};\phi)$. Here, $h$ from Eq.~\eqref{eq:train_objective} is given by  $h(\cdot~;\phi)=g(\boldsymbol{\ell};\phi)$.

Typically, $g(\boldsymbol{\ell};\phi)$ is chosen to be a linear combination of the losses: $g(\boldsymbol{\ell};\phi)=\sum_j \phi_j\ell_j$, with positive weights $\phi_j\ge0$ that are tuned using a grid search. However, this method can only scale to a few auxiliaries, as the run time of grid search is exponential in the number of tasks. Our method can handle a large number of auxiliaries and easily extends to a more flexible formulation in which $g$ parametrized by a deep NN. This general form allows us to capture complex interactions between tasks, and learn non-linear combinations of losses. See \figref{fig:combining_losses} for illustration.

One way to view a non-linear combination of losses is as an adaptive linear weighting, where losses have a different set of weights for each datum. If the loss at point $\xx$ is $\ell_{main}(\xx,y^{main})+g(\boldsymbol{\ell}(\xx,\yy))$, then the gradients are $\nabla_{W}\ell_{main}(\xx,y^{main})+\sum_j \frac{\partial  g}{\partial\ell_j}\nabla_{W}\ell_j(\xx,y^j)$. This is equivalent to an adaptive loss where the loss of datum $\xx$ is %$\ell_{main}(\xx,y^{main})+\sum_j \alpha_{j,\xx}\ell_j(\xx,y^j)$
$\ell_{main}+\sum_j \alpha_{j,\xx}\ell_j$ and, $\alpha_{j,\xx}=\frac{\partial  g}{\partial\ell_j}$. This observation connects our approach to other studies that assign adaptive loss weighs (e.g., \citet{du2018adapting,liu2019end}).

\textbf{Convolutional loss network.}\quad In certain problems there exists a spatial relation among losses. For example, semantic segmentation and depth estimation for images. A common approach is to average the losses over all locations. In contrast, \AuxiLearn{} can leverage this spatial relation for creating a \emph{loss-image} in which each task forms a channel of pixel-losses induced by the task. We then parametrize $g$ as a CNN that acts on this loss-image. This yields a spatial-aware loss function that captures interactions between task losses. See an example of a loss image in Figure~\ref{fig:nyu_loss_grad} %Appendix~\ref{sup:loss_image}.

\textbf{Monotonicity.}\quad It is common to parametrize the function $g(\boldsymbol{\ell};\phi)$ as a linear combination with non-negative weights. Under this parameterization, $g$ is a monotonic non-decreasing function of the losses. A natural question that arises is whether we should generalize this behavior and constrain $g(\boldsymbol{\ell};\phi)$ to be non-decreasing w.r.t. the input losses as well? Empirically, we found that training with monotonic non-decreasing networks tends to be more stable and has a better or equivalent performance. We impose monotonicity during training with negative weights clipping. See Appendix \ref{app:monotonicity} for a detailed discussion and empirical comparison to non-monotonic networks.

\subsection{Learning new auxiliary tasks} \label{sec:learning_new_auxiliries}
The previous subsection focused on situations where auxiliary tasks are given. In many cases, however, no useful auxiliary tasks are known in advance, and we are only presented with the main task. 
We now describe how to use AuxiLearn in such cases.
The intuition is simple: We wish to learn an auxiliary task that pushes the representation of the primary network to generalize better on the main task, as measured using the auxiliary set. We do so in a student-teacher manner: 
an auxiliary ``teacher'' network produces labels for the primary network (the ``student'') which tries to predict these labels as an auxiliary task. Both networks are trained jointly.

\revision{
More specifically, for auxiliary classification, we learn a soft labeling function $g(\xx;\phi)$ which produces pseudo labels $y_{aux}$ for input samples $\xx$ . These labels are then provided to the main network $f(\xx;W)$ for training (see~\figref{fig:learn_new_aux}). 
%We modify the main model $f_W(x)$ by adding a head that predicts labels  for both the main task, and the generated auxiliary task.
}  
During training, the \revision{primary network $f(\xx;W)$} outputs two predictions, $\hat{y}_{main}$ for the main task and $\hat{y}_{aux}$ for the auxiliary task. %The auxiliary network $g(\xx;\phi)$ produces soft auxiliary labels $y_{aux}$, which we treat as ``adaptive ground truth". 
We then compute the full training loss $\LL_T=\ell_{main}(\hat{y}_{main}, y_{main}) + \ell_{aux}(\hat{y}_{aux}, y_{aux})$ to update $W$. 
Here, the $h$ component of $\LL_T$ in Eq.~\eqref{eq:train_objective} is given by $h(\cdot~;\phi)=\ell_{aux}(f(\xx_i^t;W),g(\xx_i^t;\phi))$. As before, we update $\phi$ using the auxiliary set with the loss  $\LL_A=\ell_{main}$.  
Intuitively, the teacher auxiliary network $g$ is rewarded when it provides labels to the student that help it succeed in the main task, as measured using $\LL_A$. 

% \iclrrevision{\textbf{Why would learning auxiliaries in this manner help?} %It appears that learning auxiliaries in this manner is futile as no external information is provided. However, we claim that this is not the case. 
% The AuxiLearn learning procedure can be viewed as learning a data-dependent regularizer that is optimized for the objective of increasing the generalization performance of the primary network. Therefore, the auxiliary network has an incentive to generate non-trivial labels. 
% In Figure \ref{fig:cifar10_labels} we provide an example of the learned auxiliary task.
% }

%\revision{\textbf{Relation to \citep{liu2019self}.} In \citep{liu2019self} the auxiliary and primary networks are optimized with the same data using a meta-learning approach. As a result, their solution suffers from mode collapse. To address this issue, an entropy term is introduced which forces the auxiliary network to spread the probability mass between the classes. Our solution is more natural as we remove the need to add an artificial loss term.}

\subsection{Optimizing auxiliary parameters} \label{sec:unified_opt}
% \vspace{-5pt}
We now return to the bi-level optimization problem in Eq. \eqref{eq:bi_level} and present the optimizing method for $\phi$.
% We now present the optimizing method for $\phi$. We return to the bi-level optimization problem in \eqref{eq:bi_level}. 
Solving Eq. \eqref{eq:bi_level} for $\phi$ poses a problem due to the indirect dependence of $\LL_A$ on the auxiliary parameters. To compute the gradients of $\phi$, we need to differentiate through the optimization process over $W$, since $\nabla_\phi\LL_A=\nabla_W\LL_A\cdot\nabla_\phi W^*$. As in \citet{liao18,lorraine2019optimizing}, we use the implicit function theorem (IFT) to evaluate $\nabla_{\phi}W^*$:
\begin{equation}
    \label{eq:phi_grad_0}
    \nabla_{\phi}W^*=-\underbrace{(\nabla^2_W\LL_T)^{-1}}_{|W|\times|W|} \cdot\underbrace{\nabla_\phi\nabla_W\LL_T}_{|W|\times|\phi|}.
\end{equation}
We can leverage the IFT to approximate the gradients of the auxiliary parameters $\phi$:
\begin{equation}
    \label{eq:ift_dw_dphi}
    \nabla_\phi{\LL_A(W^*(\phi))}=-\underbrace{\nabla_W\LL_A}_{1\times |W|} \cdot \underbrace{(\nabla^2_W\LL_T)^{-1}}_{|W|\times|W|} \cdot\underbrace{\nabla_\phi\nabla_W\LL_T}_{|W|\times|\phi|}.
\end{equation}
See Appendix~\ref{app:ift_derivation} for a detailed derivation. 
% This means that we can leverage the IFT to approximate the gradients of the auxiliary parameters $\phi$. 
To compute the vector and Hessian inverse product, we use the algorithm proposed by \citet{lorraine2019optimizing}, which uses Neumann approximation and efficient vector-Jacobian product. We note that accurately computing $\nabla_\phi\LL_A$ by IFT requires finding a point such that $\nabla_W{\LL_T} = 0$. In practice, we only approximate $W^*$, and simultaneously train both $W$ and $\phi$ by altering between optimizing $W$ on $\LL_T$, and optimizing $\phi$ using $\LL_A$. We summarize our method in Alg. \ref{alg:auxilearn} and \ref{alg:hypergrad}. Theoretical considerations regarding our method are given in Appendix~\ref{theoretical_cons}.

\begin{minipage}{0.48\textwidth}
\begin{algorithm}[H]
    \footnotesize
    \SetAlgoLined
    Initialize auxiliary parameters $\phi$ and weights $W$;
    \While{not converged}{
      \For{$k=1,...,N$}{
      $\LL_T=\ell_{main}(\xx,y;W)+h(\xx,y,W;\phi)$\\
      $W  \leftarrow W \mathrel{-} \alpha \nabla_{W} \LL_T\bigm\vert_{\phi, W}$
      }
      $\phi \leftarrow \phi\mathrel{-}\text{Hypergradient}(\LL_A, \LL_T, \phi, W)$ 
     }
     \Return{$W$}
     \label{alg:auxilearn}
     \caption{AuxiLearn}
\end{algorithm}
\end{minipage}
\hfill
\begin{minipage}{0.48\textwidth}
\begin{algorithm}[H]
    \footnotesize
    \SetAlgoLined
     \KwIn{training loss $\LL_T$, auxiliary loss $\LL_A$, a fixed point $(\phi', W^*)$, number of iterations $J$, learning rate $\alpha$}
     $v=p=\nabla_W\LL_A\vert_{(\phi', W^*)}$\\
     \For{$j=1,...,J$}{
        $v\mathrel{-}=\alpha v \cdot \nabla_W\nabla_W\LL_T$\\ %\text{\tiny\quad / Vector-Jacobian Product}\\
        $p\mathrel{+}=v$
     }
     \KwRet{$-p\nabla_{\phi}\nabla_W\LL_T\vert_{(\phi', W^*)}$}
     \label{alg:hypergrad}
     \caption{Hypergradient}
\end{algorithm}
\end{minipage}
\ignore{
We wish to find auxiliary parameters $\phi$ such that the parameters $W$, trained with the combined objective, generalize well. Hence, the \GuideNet{} objective is to find a model that generalizes better on the main task. More formally, we seek 
\begin{equation}
    \label{eq:bi_level}
    \phi^*=\arg\min_{\phi}\LL_A(W^*(\phi)),\quad \text{s.t.} \quad W^*(\phi)=\arg\min_W\LL_T(W,\phi).
\end{equation}
To compute the gradients of $\phi$, we need to differentiate through the optimization process over $W$. As in \citep{liao18,lorraine2019optimizing}, we use the implicit function theorem (IFT) and get that for  $W^*(\phi)$ we have that $\nabla_W\LL_T(W^*(\phi),\phi)$ is constant zero and compute $\nabla_\phi W^*(\phi)$. From this we get 
\begin{equation}
    \label{eq:ift_}
    \nabla_\phi{\LL_A(W^*(\phi))}=-\underbrace{\nabla_W\LL_A}_{1\times |W|} \cdot \underbrace{(\nabla^2_W\LL_T)^{-1}}_{|W|\times|W|} \cdot\underbrace{\nabla_\phi\nabla_W\LL_T}_{|W|\times|\phi|}.
\end{equation}
See the supplementary material for detailed derivation. This means that we can leverage the IFT to approximate the gradients of the auxiliary parameters $\phi$. To compute the vector and Hessian inverse product, we use the algorithm proposed in \citep{lorraine2019optimizing}, which uses Neumann approximation and efficient vector-Jacobian product. We found the exact method in \citep{lorraine2019optimizing} to be unstable and got better results by adding gradient clipping and weight normalization \citep{salimans2016weight}.

Accurately computing $\nabla_\phi{\LL_A(W^*(\phi))}$ requires finding a point such that $\nabla_W{\LL_T} = 0$. In practice, we only approximate $W^*$, and simultaneously train both $W$ and $\phi$ by altering between optimizing $W$ on $\LL_T$, and optimizing $\phi$ using $\LL_A$. %We found that updating $\phi$ every 20-50 update steps of $W$ worked well. Another source of inaccuracy may arise from computing the vector and Hessian inverse product. 

When the optimization is non-convex, as in training deep neural networks, the end result does not depend only on the final auxiliary parameters but on the full optimization path. We show this empirically in the supplementary material where we observe that training from scratch using the fixed final trained auxiliary $\phi^*$ does not perform as well as the joint training of both $\phi$ and $W$.
}

\section{Analysis} \label{sec:analysis}

\subsection{Complexity of auxiliary hypothesis space}
In our learning setup, an additional \metaVal{} is used for tuning a large set of auxiliary parameters. A natural question arises: could the auxiliary parameters overfit this \metaVal{}? and what is the complexity of the auxiliary hypothesis space $\mathcal{H}_\phi$? Analyzing the complexity of this space is difficult because it is coupled with the hypothesis space $\mathcal{H}_W$  of the main model. One can think of this hypothesis space as a subset of the original model hypothesis space $\mathcal{H}_\phi=\{h_W:\exists\phi\,\text{ s.t. } W=\arg\min_W\LL_T(W,\phi) \}\subset\mathcal{H}_W$. Due to the coupling with $\mathcal{H}_W$ the behavior can be unintuitive. We show that even simple auxiliaries can have infinite VC dimensions. 

\textbf{Example:}\quad Consider the following 1D hypothesis space for binary classification $\mathcal{H}_W=\{\lceil \cos(W x)\rceil, W\in\mathbb{R}\}$, which has  infinite VC-dimension. Let the main loss be the zero-one loss and the auxiliary loss be $h(\phi,W)=(\phi-W)^2$, namely, an  $L_2$ regularization with a learned center. Since the model hypothesis space $\mathcal{H}_W$ has an infinite VC-dimension, there exist training and auxiliary sets of any size that are shattered by $\mathcal{H}_W$. Therefore, for any labeling of the auxiliary and training sets, we can let $\phi=\hat{\phi}$, the parameter that perfectly classifies both sets. We then have that $\hat{\phi}$ is the optimum of the training with this auxiliary loss and we get that $\mathcal{H}_\phi$ also has an infinite VC-dimension. 

This important example %\revision{complements the empirical observation shown in~\cite{lorraine2019optimizing}.}
shows that even seemingly simple-looking auxiliary losses can overfit due to the interaction with the model hypothesis space. Thus, it motivates our use of a separate auxiliary set.

\subsection{Analyzing an auxiliary task effect}
When designing or learning auxiliary tasks, one important question is, what makes an auxiliary task useful? 
Consider the following loss with a single auxiliary task \(\LL_T(W,\phi)=\sum_i\ell_{main}(\xx_i^t,\yy_i^t,W)+\phi\cdot \ell_{aux}(\xx_i^t,\yy_i^t,W)\). Here \(h=\phi\cdot\ell_{aux}\). Assume $\phi=0$ so we optimize $W$ only on the standard main task loss. We can now check if $\frac{d\LL_A}{d\phi}|_{\phi=0}>0$, namely would it help to add this auxiliary task?

\begin{prop}
Let  $\LL_T(W,\phi)=\sum_i\ell_{main}(\xx_i^t,\yy_i^t,W)+\phi\cdot \ell_{aux}(\xx_i^t,\yy_i^t,W)$. Suppose that $\phi=0$ and that the main task was trained until convergence. We have 
\begin{equation}
    \frac{d\LL_A(W^*(\phi))}{d\phi}\Big\vert_{\phi=0}=-\langle\nabla_W\LL_A^T, \nabla^2_W\LL_T^{-1} \nabla_W\LL_T \rangle, 
\end{equation}
i.e. the gradient with respect to the auxiliary weight is the inner product between the Newton methods update and the gradient of the loss on the auxiliary set.
\end{prop}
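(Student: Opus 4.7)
The plan is to directly specialize the general IFT formula of Eq.~\eqref{eq:ift_dw_dphi} to the one-parameter linear case. Writing $\LL_{main}(W):=\sum_i\ell_{main}(\xx_i^t,\yy_i^t,W)$ and $\LL_{aux}(W):=\sum_i\ell_{aux}(\xx_i^t,\yy_i^t,W)$, the training loss becomes $\LL_T(W,\phi)=\LL_{main}(W)+\phi\,\LL_{aux}(W)$, which is linear in $\phi$.

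First, I would compute the mixed partial. Because of the linearity in $\phi$, we simply have $\nabla_\phi\nabla_W\LL_T=\nabla_W\LL_{aux}$, independent of $\phi$. Plugging this into Eq.~\eqref{eq:ift_dw_dphi} yields
\[
\frac{d\LL_A(W^*(\phi))}{d\phi}=-\nabla_W\LL_A^T\,(\nabla_W^2\LL_T)^{-1}\,\nabla_W\LL_{aux},
\]
evaluated at $(\phi,W^*(\phi))$. At $\phi=0$ the training Hessian collapses to $\nabla_W^2\LL_T|_{\phi=0}=\nabla_W^2\LL_{main}$, which we take to be invertible.

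Second, I would leverage the convergence hypothesis. ``Trained until convergence'' at $\phi=0$ means $\nabla_W\LL_{main}(W^*(0))=0$, and this gives the key identification
\[
\nabla_W\LL_T(W^*(0),1)=\nabla_W\LL_{main}(W^*(0))+\nabla_W\LL_{aux}(W^*(0))=\nabla_W\LL_{aux}(W^*(0)),
\]
so the auxiliary-loss gradient coincides with the full training-loss gradient evaluated at the converged point with the auxiliary switched on. Substituting this identification into the display above yields the stated form $-\langle\nabla_W\LL_A^T,(\nabla_W^2\LL_T)^{-1}\nabla_W\LL_T\rangle$. The vector $-(\nabla_W^2\LL_T)^{-1}\nabla_W\LL_T$ is literally one Newton step toward the minimizer of $\LL_T$ from the converged point, which is what justifies the Newton-update interpretation in the proposition.

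The main subtlety is notational rather than technical: read naively at $\phi=0$, the factor $\nabla_W\LL_T$ on the right-hand side is zero, so one has to be explicit that this factor is to be read at $W^*(0)$ with the auxiliary switched on (giving $\nabla_W\LL_{aux}$), while the Hessian is taken at $\phi=0$ and coincides with the main-task Hessian. Once this bookkeeping is made explicit, the proposition is a one-line corollary of the IFT derivation already given in Appendix~\ref{app:ift_derivation}, and no deeper technical obstacle remains.
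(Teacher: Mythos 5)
Your proposal is correct and follows essentially the same route as the paper's proof: specialize the general IFT formula, observe that linearity in $\phi$ gives $\nabla_\phi\nabla_W\LL_T=\nabla_W\LL_{aux}$, and use convergence of the main task ($\nabla_W\LL_{main}=0$) to identify this with the full training gradient. Your explicit remark that the $\nabla_W\LL_T$ factor must be read with the auxiliary switched on (otherwise it is trivially zero at $\phi=0$) is a useful clarification of a point the paper leaves implicit, but it is bookkeeping rather than a different argument.
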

\begin{proof}
    In the general case, the following holds  $\frac{d\LL_A}{d\phi}=-\nabla_W\LL_A(\nabla^2_W\LL_T)^{-1}\nabla_{\phi}\nabla_W\LL_T$. For a linear combination, we have $\nabla_{\phi}\nabla_W\LL_T=\sum_i\nabla_W\ell_{aux}(\xx_i^t,\yy_i^t)$. Since $W$ is optimized till convergence of the main task we obtain $\nabla_{\phi}\nabla_{W}\LL_T=\nabla_W\LL_T$.
\end{proof}
This simple result shows that the key quantity to observe is the Newton update, rather than the gradient which is often used~\citep{lin2019adaptive,du2018adapting}. Intuitively, the Newton update is the important quantity because if $\Delta\phi$ is small then we are almost at the optimum. Then, due to quadratic convergence, a single Newton step is sufficient for approximately converging to the new optimum.

\section{Experiments}
\vspace{-5pt}
\label{sec:experiments}
We evaluate the \AuxiLearn{} framework in a series of tasks of two types: combining given auxiliary tasks into a unified loss (Sections \ref{sec:illustrative} - \ref{sec:NYU}), and generating a new auxiliary task (Section \ref{sec:learning_cls_exp}). \revision{Further experiments and analysis of both modules are given in Appendix~\ref{sec:additional_exp}.} 
Throughout all experiments, we use an extra data split for the \metaVal{}. Hence, we use four data sets: training set, validation set, test set, and \metaVal{}. The samples for the \metaVal{} are pre-allocated from the training set. For a fair comparison, these samples are used as part of the training set by all competing methods. Effectively, this means we have a slightly smaller training set for optimizing the parameters $W$ of the primary network. In all experiments, we report the mean performance (e.g., accuracy) along with the Standard Error of the Mean (SEM). 
\revision{Full implementation details of all experiments are given in Appendix~\ref{training_details}.}
%Further experimental details, design choices, and analysis are provided in the supplementary material. 
Our code is available at \textcolor{magenta}{\url{https://github.com/AvivNavon/AuxiLearn}}.

%We further discuss the effect of the size of the \metaVal{} and \GuideNet{} complexity in the supplementary material. 

\textbf{Model variants.}\quad 
For learning to combine losses, we evaluated the following variants of auxiliary networks: \textbf{(1) Linear}: A convex linear combination between the loss terms; \textbf{(2) Linear neural network (Deep linear)}: A deep fully-connected NN with linear activations; \textbf{(3) Nonlinear}: A standard feed-forward NN over the loss terms. For %the segmentation task 
Section~\ref{sec:NYU} only \textbf{(4) ConvNet}: A CNN over the loss-images. The expressive power of the deep linear network is equivalent to that of a 1-layer linear network; However, from an optimization perspective, it was shown that the over-parameterization introduced by the network's depth could stabilize and accelerate convergence \citep{arora2018optimization, saxe2014exact}. All variants are constrained to represent only monotone non-decreasing functions.% by clipping negative parameters.
\subsection{An illustrative example} \label{sec:illustrative}

\begin{wrapfigure}[10]{r}{0.6\textwidth}
    \vspace{-20pt}
    \small
    \text{
        \hspace{25pt} (a) main task
        \hspace{27pt} (b) $t=0$ 
        \hspace{27pt} (c) $t=T$
        \hspace{80pt}
        }
    \centering
    \includegraphics[width=.95\linewidth]{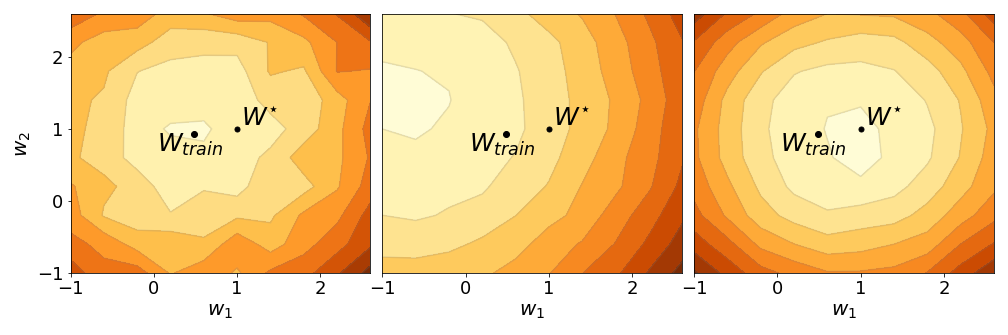}
    \caption{Loss landscape generated by the \GuideNet{}. Darker is higher. See text for details.}
    \label{fig:loss_landscape}
\end{wrapfigure}
We first present an illustrative example of how \AuxiLearn{} changes the loss landscape and helps generalization in the presence of label noise and harmful tasks. Consider a regression problem with $y_{main}=\ww^{\star T} \xx+\epsilon_0$ and two auxiliary tasks. The first auxiliary is helpful, $y_1=\ww^{\star T} \xx + \epsilon_1$, whereas the second auxiliary is harmful $y_{2}=\tilde{\ww}^T \xx + \epsilon_{2}$, $\tilde{\ww}\ne\ww^\star$. We let $\epsilon_0\sim \mathcal{N}(0, \sigma_{main}^2)$ and $\epsilon_1,\epsilon_2\sim \mathcal{N}(0, \sigma_{aux}^2)$, with $\sigma_{main}^2 > \sigma_{aux}^2$. We optimize a linear model with weights $\ww \in \mathbb{R}^2$ that are shared across tasks, i.e., no task-specific parameters. We set $\ww^{\star}=(1, 1)^T$ and $\tilde{\ww}=(2, -4)^T$. We train an \GuideNet{} to output linear task weights and observe the changes to the loss landscape in \figref{fig:loss_landscape}. The left plot shows the loss landscape for the main task, with a training set optimal solution $\ww_{train}$. Note that $\ww_{train}\neq\ww^*$ due to the noise in the training data. The loss landscape of the weighted train loss at the beginning ($t=0$) and the end ($t=T$) of training is shown in the middle and right plots, respectively. Note how AuxiLearn learns to ignore the harmful auxiliary and use the helpful one to find a better solution by changing the loss landscape. In Appendix~\ref{sec:noisy_aux} 
we 
%extend this experiment and 
show that the auxiliary task weight is inversely proportional to the label noise.
%\revision{To emphasize the importance of using an auxiliary set, in ~\secref{sup:opt_on_train} we repeat this experiment, but using the training set to optimize $\phi$.}

\begin{figure}[!t]
    \text{
    \small
        \hspace{10pt} (a) image 
        \hspace{15pt} (b) GT labels
        \hspace{8pt} (c) aux. loss
        \hspace{7pt} (d) main loss 
        \hspace{5pt} (e) pix. weight
        \hspace{3pt}
        }
    \centering
    \includegraphics[width=0.75\linewidth]{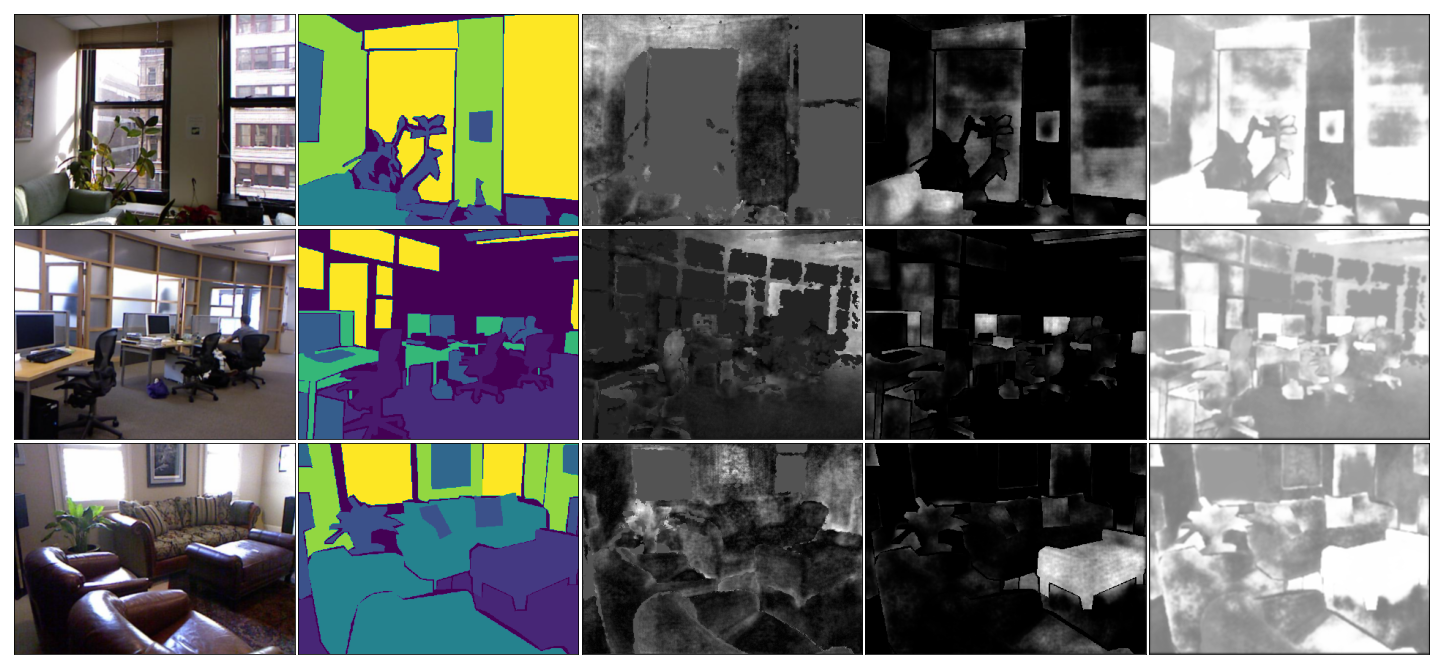}
    \caption{\textit{Loss images}  on test examples from NYUv2: \textbf{(a)} original image; \textbf{(b)} semantic segmentation ground truth; \textbf{(c)} auxiliaries loss; \textbf{(d)} segmentation (main task) loss; \textbf{(e)} adaptive pixel-wise weight $ \sum_j\partial \LL_T/\partial \ell_j$.}
    \label{fig:nyu_loss_grad}
\end{figure}

\subsection{Fine-grained classification with many auxiliary tasks} \label{sec:cub_exp}
In fine-grained visual classification tasks, annotators should have domain expertise, making data labeling challenging and potentially expensive (e.g., in the medical domain). In some cases, however, non-experts can annotate visual attributes that are informative about the main task. As an example, consider the case of recognizing bird species, which would require an ornithologist, yet a layman can describe the head color or bill shape of a bird. These features naturally form auxiliary tasks, which can be leveraged for training jointly with the main task of bird classification.

\begin{wraptable}[14]{r}{0.55\linewidth}
\vspace{-11pt}
\setlength{\tabcolsep}{3pt}
\scriptsize
\caption{ Test classification accuracy on CUB 200-2011 dataset, averaged over three runs ($\pm$ SEM).}
\centering
\begin{tabular}{l c c c c c}
\toprule
 & \multicolumn{2}{c}{5-shot} & & \multicolumn{2}{c}{10-shot} \\
 \cmidrule{2-3}\cmidrule{5-6}
 & Top 1 & Top 3 & & Top 1 & Top 3 \\ 
\midrule
STL & 35.50 $\pm$ 0.7 & 54.79 $\pm$ 0.7 & & 54.79 $\pm$ 0.3 & 74.00 $\pm$ 0.1 \\
Equal & 41.47 $\pm$ 0.4 & 62.62 $\pm$ 0.4 & & 55.36 $\pm$ 0.3 & 75.51 $\pm$ 0.4 \\
\hline
Uncertainty & 35.22 $\pm$ 0.3 & 54.99 $\pm$ 0.7 & & 53.75 $\pm$ 0.6 & 73.25 $\pm$ 0.3 \\
DWA & 41.82 $\pm$ 0.1 & 62.91 $\pm$ 0.4 & & 54.90 $\pm$ 0.3 & 75.74 $\pm$ 0.3 \\
GradNorm & 41.49 $\pm$ 0.4 & 63.12 $\pm$ 0.4  & & 55.23 $\pm$ 0.1 &75.62 $\pm$ 0.3 \\
GCS & 42.57 $\pm$ 0.7 & 62.60 $\pm$ 0.1 & & 55.65 $\pm$ 0.2 & 75.71 $\pm$ 0.1 \\
\midrule
\textbf{\AuxiLearn{}} &  &  &  &  \\
Linear & 41.71 $\pm$ 0.4 & 63.73 $\pm$ 0.6 & & 54.77 $\pm$ 0.2 & 75.51 $\pm$ 0.7 \\
Deep Linear & 45.84 $\pm$ 0.3 & 66.21 $\pm$ 0.5 & & 57.08 $\pm$ 0.2 & 75.3 $\pm$ 0.6 \\
Nonlinear &\textbf{47.07 $\pm$ 0.1} &\textbf{68.25 $\pm$ 0.3} &&\textbf{59.04 $\pm$ 0.2} &\textbf{78.08 $\pm$ 0.2}\\
\bottomrule
\end{tabular}
\label{cub_tab:results_few}
\end{wraptable}

We evaluate \AuxiLearn{} in this setup of fine-grained classification using the Caltech-UCSD Birds 200-2011 dataset (CUB) \citep{WahCUB_200_2011}. CUB contains 200 bird species in 11,788 images, each associated with a set of 312 binary visual attributes, which we use as auxiliaries. Since we are interested in setups where optimizing the main task alone does not generalize well, we demonstrate our method in a semi-supervised setting: we assume that auxiliary labels are available for all images but only $K$ labels per class are available for the main task (noted as $K$-shot).

%mainly interested in the low-data regime, we tested \GuideNet{} in a semi-supervised setting with 5 and 10 labeled samples from each class.

% \begin{figure}[!t]
%     \text{
%         \hspace{5pt} (a) image 
%         \hspace{25pt} (b) labels
%         \hspace{25pt} (c) aux. loss
%         \hspace{20pt} (d) main loss 
%         \hspace{20pt} (e) pix. weight
%         }
%     \centering
%     \includegraphics[width=0.95\linewidth]{figures/nyu_losses_and_gradients_aux_loss_tight.png}
%     \caption{\gal{The message of this figure is not clear. AFAIR reviewers were also not happy with it.} \gal{nit: maybe replace a,b,c with "original" , "auxilearn",...}\textit{Loss images}  on test examples from NYUv2: \textbf{(a)} original image; \textbf{(b)} semantic segmentation ground truth; \textbf{(c)} auxiliaries loss; \textbf{(d)} segmentation (main task) loss; \textbf{(e)} adaptive pixel-wise weight $ \sum_j\partial \LL_T/\partial \ell_j$.}
%     \label{fig:nyu_loss_grad}
% \end{figure}

We compare \AuxiLearn{} with the following MTL and auxiliary learning baselines:
\noindent \textbf{(1) Single-task learning (STL):} Training only on the main task.
\noindent \textbf{(2) Equal:} Standard multitask learning with equal weights for all auxiliary tasks.
\noindent \textbf{(3) GradNorm} \citep{chen2017gradnorm}: An MTL method that scales losses based on gradient magnitude.
\noindent \textbf{(4) Uncertainty} \citep{kendall2018multi}: An MTL approach that uses task uncertainty to adjust task weights. 
\noindent \textbf{(5) Gradient Cosine Similarity (GCS)} \citep{du2018adapting}: An auxiliary-learning approach that uses gradient similarity between the main and auxiliary tasks. %to determine if an auxiliary should be used.
\noindent \textbf{(6) Dynamic weight averaging (DWA)}   \citep{liu2019end}: An MTL approach that sets task weights based on the rate of loss change over time. The primary network in all experiments is ResNet-18 \citep{he2016deep} pre-trained on ImageNet. %\citep{deng2009imagenet}. 
%In our setting, 
We use a 5-layer fully connected NN for the auxiliary network. Sensitivity analysis of the network size and auxiliary set size is presented in Appendix~\ref{sec:cub_analysis}.

Table \ref{cub_tab:results_few} shows the test set classification accuracy. Most methods significantly improve over the STL baseline, highlighting the benefits of using additional (weak) labels. Our \textit{Nonlinear} and \textit{Deep linear} \GuideNet{} variants outperform all previous approaches by a large margin.
As expected, a non-linear \GuideNet{} is better than its linear counterparts. This suggests that there are some non-linear interactions between the loss terms that the non-linear network is able to capture. Also, notice the effect of using deep-linear compared to a (shallow) linear model. This result indicates that at least part of the improvement achieved by our method is attributed to the over-parameterization of the \GuideNet. In the Appendix we further analyze properties of auxiliary networks. Appendix \ref{sec:linear_weighted_poly} visualizes the full optimization path of a linear auxiliary network over a polynomial kernel on the losses, and Appendix \ref{sec:fixed_aux_params} shows that the last state of the auxiliary network is not informative enough. %From both experiments, we conclude that the training dynamics and coupling between auxiliary and primary parameters is crucial for good generalization.

\subsection{Pixel-wise losses} \label{sec:NYU}
\begin{wraptable}[14]{r}{0.45\linewidth}
\vspace{-10pt}
\scriptsize
%\begin{table}[ht]
\centering
\caption{Test results for semantic segmentation on NYUv2, averaged over four runs ($\pm$ SEM).
}
\setlength{\tabcolsep}{3pt}
    \begin{tabular}[!t]{l  c  c}
    \toprule
    &mIoU &Pixel acc.\\
    % &$\pm$ SEM & $\pm$ SEM \\
    \midrule
    STL & $18.90 \pm 0.21$ & $54.74 \pm 0.94$\\
    Equal & $19.20 \pm 0.19$ & $55.37 \pm 1.00$\\
    \hline
    Uncertainty & $19.34 \pm 0.18$ & $55.70 \pm 0.79$\\
    DWA & $19.38 \pm 0.14$ & $55.37 \pm 0.35$\\
    GradNorm & $19.52 \pm 0.21$ & $56.70 \pm 0.33$\\
    \revision{MGDA} & $19.53 \pm 0.35$ & $56.28 \pm 0.46$\\
    GCS & $19.94 \pm 0.13$ & $56.58 \pm 0.81$\\
    \midrule
    \textbf{\AuxiLearn{} (ours)} & & \\
    Linear   & $20.04 \pm 0.38$ & $\mathbf{56.80 \pm 0.14}$\\
    Deep Linear & $19.94 \pm 0.12$ & $56.45 \pm 0.79$\\
    Nonlinear& $20.09 \pm 0.34$ &
    $\mathbf{56.80 \pm 0.53}$\\
    ConvNet & $\mathbf{20.54 \pm 0.30}$ & $56.69 \pm 0.44$\\
    % ConvNet (ours)& \fbox{$\mathbf{20.54 \pm 0.30}$} & $56.690 \pm 0.44$\\
    \bottomrule
    \end{tabular}
\label{Tab:nyu_results}
%\end{table}
\end{wraptable}
We consider the indoor-scene segmentation task from \citet{couprie2013indoor}, that uses the NYUv2 dataset \citep{Silberman:ECCV12}. We consider the 13-class semantic segmentation as the main task, with depth and surface-normal prediction \citep{eigen2015predicting} as auxiliaries. We use SegNet~\citep{badrinarayanan2017segnet} based model
for the primary network, and a 4-layer CNN for the auxiliary network.

Since losses in this task are given per-pixel, we can apply the ConvNet variant of the \GuideNet{} to the loss image. Namely, each task forms a channel with the per-pixel losses as values. Table~\ref{Tab:nyu_results} reports the mean Intersection over Union (mIoU) and pixel accuracy for the main segmentation task. \revision{Here, we also compare with MGDA~\citep{sener2018multi} which 
had extremely long training time in CUB experiments due to the large number of auxiliary tasks, and therefore was not evaluated in Section~\ref{sec:cub_exp}}. 
All weighting methods achieve a performance gain over the STL model. The ConvNet variant of \AuxiLearn{} outperforms all competitors in terms of test mIoU.

\fig{fig:nyu_loss_grad} shows examples of the loss-images for the auxiliary (c) and main (d) tasks, together with the pixel-wise weights (e). First, note how the loss-images resemble the actual input images. This suggests that a spatial relationship can be leveraged using a CNN auxiliary network. Second, pixel weights are a non-trivial combination of the main and auxiliary task losses. In the top (bottom) row, the plant (couch) has a low segmentation loss and intermediate auxiliary loss. As a result, a higher weight is allocated to these pixels which increases the error signal.

\subsection{Learning Auxiliary labels} 
\label{sec:learning_cls_exp}
% \bottomrule
% \end{tabular}
% }
% \label{Tab:learnable_task_results_2}
% \end{table}

% \begin{wraptable}[10]{r}{0.5\linewidth}
%     \setlength{\tabcolsep}{1pt}
%     \scalebox{0.85}{\small{
%     \centering
%     \begin{tabular}[ht]{l cc  cc  cc c}
%     \toprule
%     & \multicolumn{1}{c}{CIFAR10} & &\multicolumn{1}{c}{CIFAR100} & &\multicolumn{1}{c}{\revision{SVHN}}&
%     &\multicolumn{1}{c}{Intel}\\
%     \midrule
%     STL & $50.8 \pm 0.8$ & &  $19.8 \pm 0.7$  & & $72.9 \pm 0.3$ & &  \revision{$79.4 \pm 0.6$}  \\
%     MAXL-F & $56.1 \pm 0.1$ & & $20.4 \pm 0.6$  & & $75.4 \pm 0.3$ & &  \revision{$80.5 \pm 0.3$}  \\
%     MAXL & $58.2 \pm 0.3$  && $21.0 \pm 0.4$ && $75.5 \pm 0.4$
%     & &  \revision{$81.1 \pm 0.6$}  \\
%     \midrule
%     \textbf{\AuxiLearn{}} & $\mathbf{60.7 \pm 1.3}$
%     && $\mathbf{21.5 \pm 0.3}$ && $ \mathbf{76.4 \pm 0.2}$ && \revision{$\mathbf{81.3 \pm 0.7}$}\\
%     \bottomrule
%     \end{tabular}
%     }}
%     \caption{Learning auxiliary task. Test accuracy averaged over three runs ($\pm$SEM).}
%     \label{Tab:learnable_task_results_2}
% \end{wraptable}
\setlength{\tabcolsep}{2pt}
\begin{table}[!h]\small
\centering
\caption{Learning auxiliary task. Test accuracy averaged over three runs ($\pm$SEM) \revision{without pre-training.}}
\scalebox{0.7}{
\begin{tabular}[!b]{l cc  cc  cc  cc  cc  cc  cc}
    \toprule
    & \multicolumn{1}{c}{CIFAR10 (5\%)} & &\multicolumn{1}{c}{CIFAR100 (5\%)} & &\multicolumn{1}{c}{SVHN (5\%)} &
    &\multicolumn{1}{c}{CUB (30-shot)} & &\multicolumn{1}{c}{Pet (30-shot)} & &\multicolumn{1}{c}{Cars (30-shot)}\\
    \midrule
    STL & $50.8 \pm 0.8$ & &  $19.8 \pm 0.7$  & & $72.9 \pm 0.3$ & & $37.2 \pm 0.8$ & & $26.1 \pm 0.5$ & &  $59.2 \pm 0.4$\\
    MAXL-F & $56.1 \pm 0.1$ & & $20.4 \pm 0.6$  & & $75.4 \pm 0.3$ & & $39.6 \pm 1.3$ & & $26.2 \pm 0.3$ & &  $59.6 \pm 1.1$\\
    MAXL & $58.2 \pm 0.3$  && $21.0 \pm 0.4$ && $75.5 \pm 0.4$ & & $40.7 \pm 0.6$ & &  $26.3 \pm 0.6$ & & $60.4 \pm 0.8$\\
    \midrule
    \textbf{\AuxiLearn{}} & $\mathbf{60.7 \pm 1.3}$
    && $\mathbf{21.5 \pm 0.3}$ && $ \mathbf{76.4 \pm 0.2}$ && $\mathbf{44.5 \pm 0.3}$ & & $\mathbf{37.0 \pm 0.6}$ & & $\mathbf{64.4 \pm 0.3}$\\
    \bottomrule
\end{tabular}}
\label{Tab:learnable_task_results_2}
\end{table}

In many cases, designing helpful auxiliaries is challenging. We now evaluate AuxiLearn in learning multi-class classification auxiliary tasks. We use three multi-class classification datasets: CIFAR10, CIFAR100 \citep{krizhevsky2009learning}, SVHN \citep{netzer2011reading}, and three fine-grained classification datasets: CUB-200-2011, Oxford-IIIT Pet \citep{parkhi12a}, and Cars \citep{KrauseStarkDengFei_Fei_3DRR2013}. Pet contains 7349 images of 37 species of dogs and cats, and Cars contains 16,185 images of 196 cars. %The official data splits were used in all datasets.

Following \citet{liu2019self}, we learn a different auxiliary task for each class of the main task. %\revision{Intuitively, this allows us to uncover hierarchies within each main task class.} \gal{Not clear. What hierarchies}  
In all experiments and all learned tasks, we set the number of classes to 5. To examine the effect of the learned auxiliary losses in the low-data regime, we evaluate the performance while training with only 5\% of the training set in CIFAR10, CIFAR100, and SVHN datasets, and $\sim 30$ samples per class in CUB, Oxford-IIIT Pet, and Cars. We use  VGG-16~\citep{simonyan2014very} as the backbone for both CIFAR datasets, a 4-layers ConvNet for the SVHN experiment, and ResNet18 for the fine-grained datasets. In all experiments, the architectures of the auxiliary and primary networks were set the same and were trained from scratch without pre-training.

We compared our approach with the following baselines:
\noindent \textbf{(1) Single-task learning (STL):} Training the main task only. \noindent \textbf{(2) MAXL:} Meta AuXiliary Learning (MAXL) proposed by~\citet{liu2019self} for learning auxiliary tasks. MAXL optimizes the label generator in a meta-learning fashion. 
\textbf{(3) MAXL-F:} A frozen MAXL label generator, that is initialized randomly. It decouples the effect of having a teacher network from the additional effect brought by the training process.

\revision{%The results are presented in 
Table~\ref{Tab:learnable_task_results_2} shows that \AuxiLearn{} outperforms all baselines in all setups, even-though it sacrifices some of the training set for the \metaVal{}. It is also worth noting that our optimization approach is significantly faster than MAXL, yielding $\times3$ improvement in run-time.} 
In Appendix ~\ref{sec:pc} and \ref{sec:learning_aux_furter} we show additional experiments for this setup, including an extension of the method to point-cloud part segmentation and experiments with varying training data sizes.

\iclrrevision{
\figref{fig:cifar10_labels} presents a 2D t-SNE projection of the 5D vector of auxiliary (soft) labels that are learned using AuxiLearn. We use samples of the main classes \textit{Frog} (left) and \textit{Deer} (right) from the CIFAR10 dataset. t-SNE was applied to each auxiliary task separately. When considering how images are projected to this space of auxiliary soft labels, several structures emerge. The \GuideNet{} learns a fine partition of the \textit{Frog} class that separates real images from illustrations. More interesting, the soft labels learned for the class \textit{Deer} have a middle region that only contains deers with antlers (in various poses and varying backgrounds). By capturing this semantic feature in the learned auxiliary labels, the auxiliary task can help the primary network to discriminate between main task classes.}

\begin{figure}[t]
    \text{
        %\hspace{110pt} Frog
        %\hspace{120pt} Deer
        }\par%\medskip
    \centering
    \includegraphics[width=0.68\linewidth]{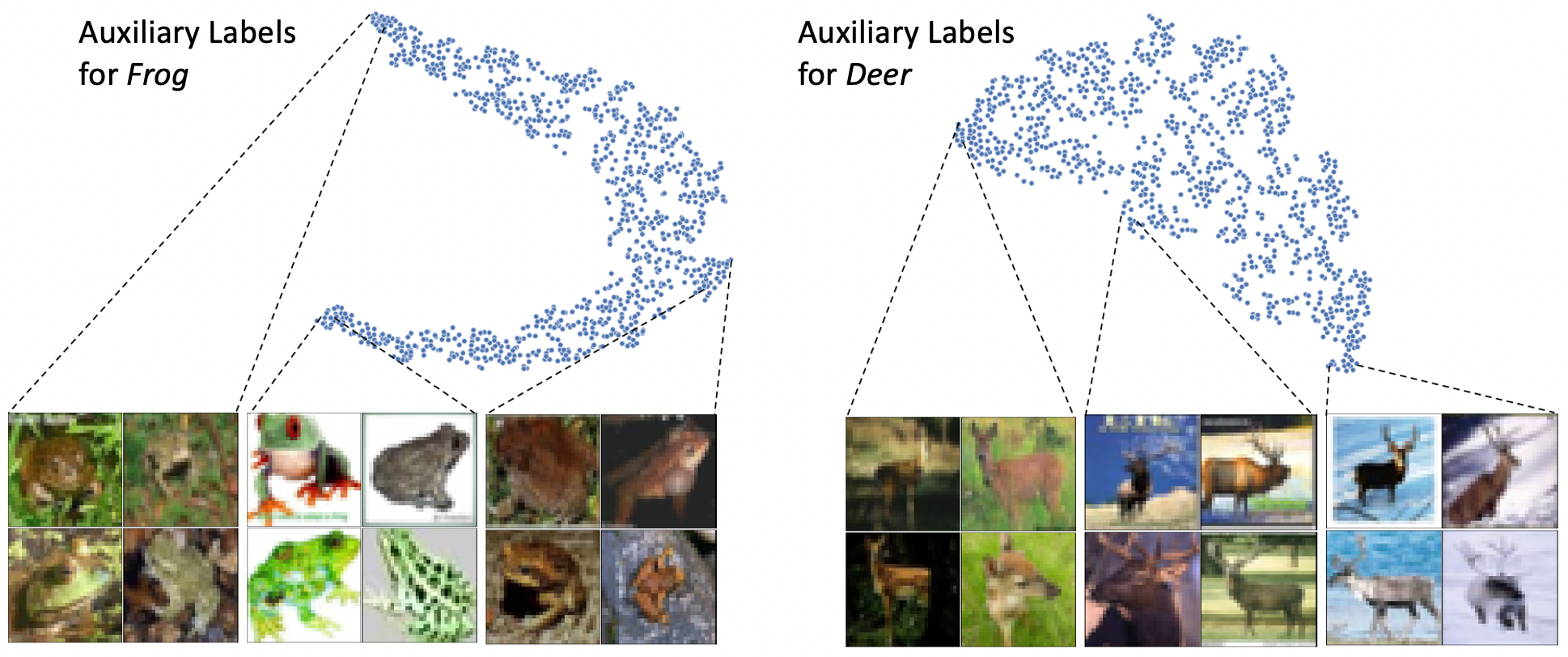}
    \caption{t-SNE applied to auxiliary labels learned for \textit{Frog} and \textit{Deer} classes, in CIFAR10. Best viewed in color.}
    \label{fig:cifar10_labels}
\end{figure}

%%%%%%%%%%%%%%%%%%%%%%% Conclusions %%%%%%%%%%%%%%%%%%%%%%% 
\section{Discussion}
% \vspace{-8pt}
\revision{In this paper, we presented a novel and unified approach for two tasks: combining predefined auxiliary tasks, and learning auxiliary tasks that are useful for the primary task. We theoretically showed which auxiliaries can be beneficial and the importance of using a separate auxiliary set. We empirically demonstrated that our method achieves significant improvement over existing methods on various datasets and tasks. This work opens interesting directions for future research. First, when training deep linear auxiliary networks, we observed similar learning dynamics to those of non-linear models. As a result, they generated better performance compared to their linear counterparts. This effect was observed in standard training setup, but the optimization path in auxiliary networks is very different. Second, we find that reallocating labeled data from the training set to an \metaVal{} is consistently helpful. A broader question remains what is the most efficient allocation.}

% even though the expressiveness power of linear and deep linear auxiliary networks is similar the latter benefits more 
% that deep models benefits auxiliary networks, even in the case of deep linear networks that have the same expressive power as linear networks. this effect is similar to what have been observed in standard training setup, but the optimization path in auxiliary networks is very different. Second, we find that shifting labeled data from  the training set to an \metaVal{} is consistently helpful. The broader question remains about the most efficient way to allocate labels to various components of the joint optimization problem. These topics await further research.}
%%%%%%%%%%%%%%%%%%%%%%% Ack %%%%%%%%%%%%%%%%%%%%%%% 
\section*{Acknowledgements}
This study was funded by a grant to GC from the Israel Science Foundation (ISF 737/2018), and by an equipment grant to GC and Bar-Ilan University from the Israel Science Foundation (ISF 2332/18). IA and AN were funded by a grant from the Israeli innovation authority, through the AVATAR consortium.

\newpage
% \section*{Broader Impact}
%  The paper discusses a new and efficient way to learn auxiliary tasks in multi-task learning. The benefits are in better optimization. The issues discussed in the broad-impact instructions do not seem applicable here.

%%%%%%%%%%%%%%%%%%%%%%% Bib %%%%%%%%%%%%%%%%%%%%%%% 

\bibliography{ref}
\bibliographystyle{apalike}

%%%%%%%%%%%%%%%%%%%%%%% Suppl %%%%%%%%%%%%%%%%%%%%%%% 

\newpage

{\LARGE{Appendix: Auxiliary Learning by Implicit Differentiation}}
% {\huge{Supplementary material:}}
% {\huge{Auxiliary Learning by Implicit Differentiation}}

\appendix
\section{Gradient derivation}
\label{app:ift_derivation}
We provide here the derivation of Eq.~\eqref{eq:ift_dw_dphi} in \secref{sec:approach}. One can look at the function $\nabla_W\LL_T(W,\phi)$ around a certain local-minima point $(\hat{W},\hat{\phi})$ and assume the Hessian $\nabla_W^2\LL_T(\hat{W},\hat{\phi})$ is positive-definite. At that point, we have $\nabla_W\LL_T(\hat{W},\hat{\phi})=0$. From the IFT,
we have that locally around $(\hat{W},\hat{\phi})$,  there exists a smooth function $W^*(\phi)$ such that  $\nabla_W\LL_T({W},{\phi})=0$ if $W=W^*(\phi)$. Since the function $\nabla_W\LL_T(W^*(\phi),\phi)$ is constant and equal to zero, we have that its derivative w.r.t. $\phi$ is also zero. Taking the total derivative we obtain
\begin{equation}
    0=\nabla_W^2\LL_T({W},{\phi})\nabla_\phi W^*(\phi)+\nabla_\phi\nabla_W\LL_T(W,\phi)\,.
\end{equation}
Multiplying by $\nabla_W^2\LL_T({W},{\phi})^{-1}$ and reordering we obtain
\begin{equation}
    \nabla_\phi W^*(\phi)=-\nabla_W^2\LL_T({W},{\phi})^{-1}\nabla_\phi\nabla_W\LL_T(W,\phi)\,.
\end{equation}
We can use this result to compute the gradients of the \metaVal{} loss w.r.t $\phi$
\begin{equation}
    \label{eq:ift_full}
    \nabla_\phi{\LL_A(W^*(\phi))}=\nabla_W\LL_A \cdot\nabla_\phi W^*(\phi)=-\nabla_W\LL_A \cdot (\nabla^2_W\LL_T)^{-1} \cdot\nabla_\phi\nabla_W\LL_T .
\end{equation}

As discussed in the main text, fully optimizing $W$ to convergence is too computationally expensive. Instead, we update $\phi$  once for every several update steps for $W$, as seen in Alg.~\ref{alg:auxilearn}. To compute the vector inverse-Hessian product, we use Alg.~\ref{alg:hypergrad} that was proposed in~\citep{lorraine2019optimizing}.

% \begin{algorithm}[H]
%     \SetAlgoLined
%     Initialize auxiliary parameters $\phi$ and weights $W$;
%     \While{not converged}{
%       \For{$k=1,...,N$}{
%       $\LL_T=\ell_{main}(\xx,y;W)+h(\xx,y,W;\phi)$\\
%       $W  \leftarrow W \mathrel{-} \alpha \nabla_{W} \LL_T\bigm\vert_{\phi, W}$
%       }
%       $\phi \leftarrow \phi\mathrel{-}\text{Hypergradient}(\LL_A, \LL_T, \phi, W)$ 
%      }
%      \Return{$W$}
%      \label{alg:auxilearn}
%      \caption{AuxiLearn}
% \end{algorithm}

% \begin{algorithm}[H]
% \SetAlgoLined
%  \KwIn{training loss $\LL_T$, validation loss $\LL_V$, a fixed point $(\phi', W')$, number of iterations $J$, learning rate $\alpha$}
%  $v=p=\nabla_W\LL_V\vert_{(\phi', W')}$\\
%  \For{$j=1,...,J$}{
%     $v\mathrel{-}=\alpha v \cdot \nabla_W\nabla_W\LL_T$ \text{\quad / Vector-Jacobian Product}\\
%     $p\mathrel{+}=v$
%  }
%  \KwRet{$-p\nabla_{\phi}\nabla_W\LL_T\vert_{(\phi', W')}$}
%  \label{alg:hypergrad}
%  \caption{Hypergradient}
% \end{algorithm}

\section{Experimental details}
 \label{training_details}

\subsection{CUB 200-2011} \label{sec:cub_dataset_desc}
\textbf{Data.} 
\revision{To examine the effect of varying training set sizes we use all 5994  predefined images for training according to the official split and, we split the predefined test set to 2897 samples for validation and 2897 for testing.}
All images were resized to $256\times256$ and Z-score normalized. During training, images were randomly cropped to $224$ and flipped horizontally. Test images were centered cropped to $224$. The same processing was applied in all fine-grain experiments. 

\textbf{Training details for baselines.} We fine-tuned a ResNet-18 \citep{he2016deep} pre-trained on ImageNet \citep{deng2009imagenet} with a classification layer on top for all tasks. Because the scale of auxiliary losses differed from that of the main task, we multiplied each auxiliary loss, on all compared method, by the scaling factor $\tau = 0.1$. It was chosen based on a grid search over $\{0.1, 0.3, 0.6, 1.0\}$ using the \textit{Equal} baseline. We applied grid search over the learning rates in $\{1e-3, 1e-4, 1e-5\}$ and the weight decay in $\{5e-3, 5e-4, 5e-5\}$. For DWA \citep{liu2019end}, we searched over the temperature in $\{0.5, 2, 5\}$ and for GradNorm \citep{chen2017gradnorm}, over $\alpha$ in $\{0.3, 0.8, 1.5\}$. The computational complexity of GSC~\citep{du2018adapting} grows with the number of tasks. As a result, we were able to run this baseline only in a setup where there are two loss terms: the main and the sum of all auxiliary tasks. We ran each configuration with $3$ different seeds for $100$ epochs with ADAM optimizer \citep{Kingma2014AdamAM} and used early stopping based on the validation set.

\textbf{The \metaVal{} and auxiliary network.} In our experiments, we found that allocating as little as 20 samples from the training set for the \metaVal{} and using a NN with 5 layers and 10 units in each layer yielded good performance for both deep linear and non-linear models. We found that our method was not sensitive to these design choices. We use skip connection between the main loss $\ell_{main}$ and the overall loss term and Softplus activation.

\textbf{Optimization of the \GuideNet{}.}
In all variants of our method, the \GuideNet{} was optimized using SGD with 0.9 momentum. We applied grid search over the \GuideNet{} learning rate in $\{1e-2, 1e-3\}$ and weight decay in $\{1e-5, 5e-5\}$. The total training time of all methods was 3 hours on a 16GB Nvidia V100 GPU.

%We also found that using weight normalization \citep{salimans2016weight} in some cases helped stabilization. 

% monotone vs non-monotone

\subsection{NYUv2}\label{sec:nyu_details}
The data consists of 1449 RGB-D images, split into 795 train images and 654 test images. We further split the train set to allocate 79 images, 10\% of training examples, to construct a validation set. Following \citep{liu2019end}, we resize images to $288 \times 384$ pixels for training and evaluation and use SegNet \citep{badrinarayanan2017segnet} based architecture as the backbone. 

Similar to~\citep{liu2019end}, we train the model for $200$ epochs using Adam optimizer~\citep{Kingma2014AdamAM} with learning rate $1e-4$, and halve the learning rate after $100$ epochs. We choose the best model with early stopping on a pre-allocated validation set. For DWA \citep{liu2019end} we set the temperature hyperparameter to 2, as in the NYUv2 experiment in~\citep{liu2019end}. For GradNorm~\citep{chen2017gradnorm} we set 
$\alpha=1.5$. This value for $\alpha$ was used in~\citep{chen2017gradnorm} for the NYUv2 experiments. In all variants of our method, the \GuideNet{}s are optimized using SGD with 0.9 momentum. We allocate $2.5\%$ of training examples to form an \metaVal{}. We use grid search to tune the learning rate $\{1e-3, 5e-4, 1e-4\}$ and weight decay $\{1e-5, 1e-4\}$ of the \GuideNet{}s. \revision{Here as well, we use skip connection between the main loss $\ell_{main}$ and the overall loss term and Softplus activation.}

\subsection{Learning auxiliaries}
\textbf{Multi-class classification datasets.}
On the CIFAR datasets, we train the model for $200$ epochs using SGD with momentum $0.9$, weight decay $5e-4$, and initial learning rates $1e-1$ and $1e-2$ for CIFAR10 and CIFAR100, respectively. For the SVHN experiment, we train for $50$ epochs using SGD with momentum $0.9$, weight decay $5e-4$, and initial learning rates $1e-1$. The learning rate is modified using a cosine annealing scheduler. We use VGG-16~\citep{simonyan2014very} based architecture for the CIFAR experiments, and a 4-layer ConvNet for the SVHN experiment. For MAXL~\citep{liu2019self} label generating network, we tune the following hyperparameters: learning rate $\{1e-3, 5e-4\}$, weight decay $\{5e-4, 1e-4, 5e-5\}$, and entropy term weight $\{.2, .4, .6\}$ (see~\citep{liu2019self} for details). We explore the same learning rate and weight decay for the \GuideNet{} in our method, and also tune the number of optimization steps between every auxiliary parameter update $\{5, 15, 25\}$, and the size of the \metaVal{} $\{1.5\%, 2.5\%\}$ (of training examples). We choose the best model on the validation set and allow for early stopping.

\revision{\textbf{Fine-grain classification datasets.} In CUB experiments we use the same data and splits as described in Sections~\ref{sec:cub_exp} and \ref{sec:cub_dataset_desc}. Oxford-IIIT Pet contains $7349$ images of $37$ species of dogs and cats. We use the official train-test split. We pre-allocate $30\%$ from the training set to validation. As a results, the total number of train/validation/test images are $2576/1104/3669$ respectively. Cars \citep{KrauseStarkDengFei_Fei_3DRR2013} contains $16,185$ images of $196$ car classes. We use the official train-test split and pre-allocate $30\%$ from the training set to validation. As a results, the total number of train/validation/test images are $5700/2444/8041$ respectively. In all experiments we use ResNet-18 as the backbone network for both the primary and auxiliary networks. Importantly, the networks are not pre-trained. The task specific (classification) heads in both the primary and auxiliary networks is implemented using a 2-layer NN with sizes 512 and $C$. Where $C$ is number of labels (e.g., $200$ for CUB and $37$ for Oxford-IIIT Pet). In all experiments we use the same learning rate of $1e-4$ and weight decay of $5e-3$ which were shown to work best, based on a grid search applied on the STL baseline. For MAXL and AuxiLearn we applied a grid search over the auxiliary network learning rate and weight decay as described in the Multi-class classification datasets subsection. We tune the number of optimization steps between every auxiliary parameter update in $\{30, 60\}$ for Oxford-IIIT Pet and $\{40, 80\}$ for CUB and Cars. Also, the auxiliary set size was tuned over $\{0.084\%, 1.68\%, 3.33\%\}$ with stratified sampling.
For our method, we leverage the module of AuxiLearn for combining auxiliaries. We use a Nonlinear network with either two or three hidden layers of sizes 10 (which was selected according to a grid search). The batch size was set to 64 in CUB and Cars experiments and to 16 in Oxford-IIIT Pet experiments. We ran each configuration with 3 different seeds for 150 epochs with ADAM optimizer and used early stopping based on the validation set.
}

\section{Additional experiments} \label{sec:additional_exp}

\subsection{\revision{Importance of Auxiliary Set}}\label{sup:opt_on_train}

\begin{figure}[h]
    \text{
        }\par%\medskip
    \centering
    \includegraphics[width=.75\linewidth]{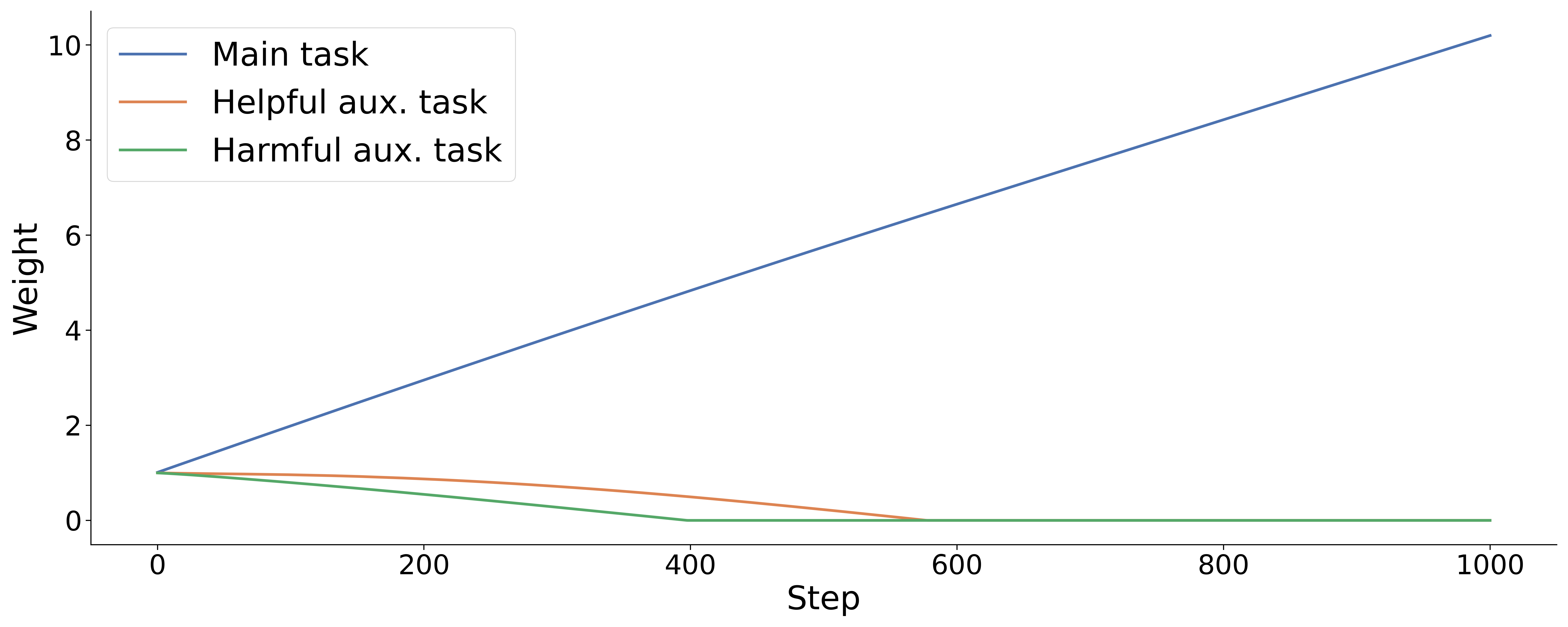}
    \caption{Optimizing task weights on the training set reduce to single-task learning.}
    \label{fig:opt_on_train}
\end{figure}

In this section we illustrate the importance of the auxiliary set to complement our theoretical observation in \secref{sec:analysis}. We repeat the experiment in \secref{sec:illustrative}, but this time we optimize the auxiliary parameters $\phi$ using the training data.
\figref{fig:opt_on_train} shows how the tasks' weights change during training. The optimization procedure is reduced to single-task learning, which badly hurts generalization (see \figref{fig:loss_landscape}). These results are consistent with~\citep{liu2019self} that added an entropy loss term to avoid the diminishing auxiliary task.

% \subsection{Visualizing the Loss Image}\label{sup:loss_image}

% \begin{figure}[ht]
%     \text{
%         \hspace{5pt} (a) image 
%         \hspace{25pt} (b) labels
%         \hspace{25pt} (c) aux. loss
%         \hspace{20pt} (d) main loss 
%         \hspace{20pt} (e) pix. weight
%         }
%     \centering
%     \includegraphics[width=0.95\linewidth]{figures/nyu_losses_and_gradients_aux_loss_tight.png}
%     \caption{\textit{Loss images}  on test examples from NYUv2: \textbf{(a)} original image; \textbf{(b)} semantic segmentation ground truth; \textbf{(c)} auxiliaries loss; \textbf{(d)} segmentation (main task) loss; \textbf{(e)} adaptive pixel-wise weight $ \sum_j\partial \LL_T/\partial \ell_j$.}
%     \label{fig:nyu_loss_grad}
% \end{figure}

% % \ia{Ethan, I am not quite seeing what you saw in these images. I will explain what I am seeing. Feel free to change :-)}
% \revision{\fig{fig:nyu_loss_grad} shows an example of the loss-images for the auxiliary (c) and main (d) tasks together with the pixel-wise weights (e). First, note how the loss-images resemble to the actual input images. Thus, a spatial relationship can be leveraged by using a CNN auxiliary network. Second, the pixel weights are a non-trivial combination of main and aux task losses. In the top (bottom) row, the plant (couch) is having low segmentation loss and intermediate auxiliary loss. As a result, a higher weight is allocated to these pixels which increase the error signal while back propagating.}

\subsection{Monotonocity}
\label{app:monotonicity}
As discussed in the main text, it is a common practice to combine auxiliary losses as a convex combination. This is equivalent to parametrize the function $g(\boldsymbol{\ell};\phi)$ as a linear combination over losses $g(\boldsymbol{\ell};\phi) = \sum_{j=1}^K \phi_j \ell_j$, with non-negative weights, $\phi_j\ge 0$. Under this  parameterization, $g$ is a monotonic non-decreasing function of the losses, since $\partial\LL_T/\partial \ell_j \ge 0$. The non-decreasing property means that the overall loss grows (or is left unchanged) with any increase to the auxiliary losses. As a result, an optimization procedure that operates to minimize the combined loss also operates in the direction of reducing individual losses (or not changing them). 

A natural question that arises is whether the function $g$  should generalize this behavior, and be constrained to be non-decreasing w.r.t. the losses as well?
Non-decreasing networks can "ignore" an auxiliary task by zeroing its corresponding loss, but cannot reverse the gradient of a task by negating its weight. While monotonicity is a very natural requirement, 
in some cases, negative task weights (i.e., non-monotonicity) seem desirable if one wishes to "delete" input information not directly related to the task at hand \citep{IB, DA}. For example, in domain adaptation, one might want to remove information that allows a discriminator to recognize the domain of a given sample  \citep{DA}. Empirically, we found that training with monotonic non-decreasing networks to be more stable and has better or equivalent performance, see Table \ref{cub_tab:mon_vs_nonmon} for comparison.

Table \ref{cub_tab:mon_vs_nonmon} compares  monotonic and non-monotonic \GuideNet{}s in both the semi-supervised and the fully-supervised setting. Monotonic networks show a small but consistent improvement over non-monotonic ones. It is also worth mentioning that the non-monotonic networks were harder to stabilize.
\setlength{\tabcolsep}{6pt}
\begin{table}[h]
\centering
\caption{CUB 200-2011: Monotonic vs non-monotonic test classification accuracy ($\pm$ SEM) over three runs. \newline}
\begin{tabular}[h]{l l cc}
\toprule
& &Top 1 &Top 3 \\\midrule
\multirow{2}{*}{5-shot} &Non-Monotonic & 46.3 $\pm$ 0.32 & 67.46 $\pm$ 0.55 \\
& Monotonic & \textbf{47.07 $\pm$ 0.10} & \textbf{68.25 $\pm$ 0.32} \\
\midrule
\multirow{2}{*}{10-shot} & Non-Monotonic & 58.84 $\pm$ 0.04 & 77.67 $\pm$ 0.08 \\
&Monotonic & \textbf{59.04 $\pm$ 0.22} & \textbf{78.08 $\pm$ 0.24} \\
\midrule
\multirow{2}{*}{Full Dataset} & Non-Monotonic & 74.74 $\pm$ 0.30 & 88.3 $\pm$ 0.23 \\
&Monotonic & \textbf{74.92 $\pm$ 0.21} & \textbf{88.55 $\pm$ 0.17} \\
\bottomrule
\end{tabular}
\label{cub_tab:mon_vs_nonmon}
\end{table}

%\ef{Not sure the following example is clear or helpful} Consider the following simple case, if we have a harmful auxiliary task the \GuideNet{} strives to decrease it's weight. When the weight becomes negative the main network (which is optimized based on the training loss) aspire to increase the auxiliary loss (as it will decrease the overall loss). On the other hand increasing the auxiliary loss is harmful for generalization so the \GuideNet{} strive to increase the loss back above 0. This 2-player game can cause instabilities and may result in large computation effort which prohibits the learning of the main task. \ia{Please check.}

\subsection{Noisy auxiliaries}  \label{sec:noisy_aux}
We demonstrate the effectiveness of \AuxiLearn{} in identifying helpful auxiliaries and ignoring harmful ones. Consider a regression problem with main task $y=\ww^T \xx+\epsilon$, where $\epsilon\sim \mathcal{N}(0, \sigma^2)$. We learn this task jointly with $K=100$ auxiliaries of the form $y_j=\ww^T \xx+|\epsilon_j|$, where $\epsilon_j\sim \mathcal{N}(0, j\cdot\sigma_{aux}^2)$ for $j=1, ..., 100$. We use the absolute value on the noise so that noisy estimations are no longer unbiased, making the noisy labels even less helpful as the noise increases.
We use a linear \GuideNet{} to weigh the loss terms. Figure~\ref{fig:noisy_aux} shows the learned weight for each task. We can see that the \GuideNet{} captures the noise patterns, and assign weights based on the noise level.

\begin{figure}[h]
    \text{
        % I'm not proud of this. Any idea how to do it better?
        }\par%\medskip
    \centering
    \includegraphics[width=1.\linewidth]{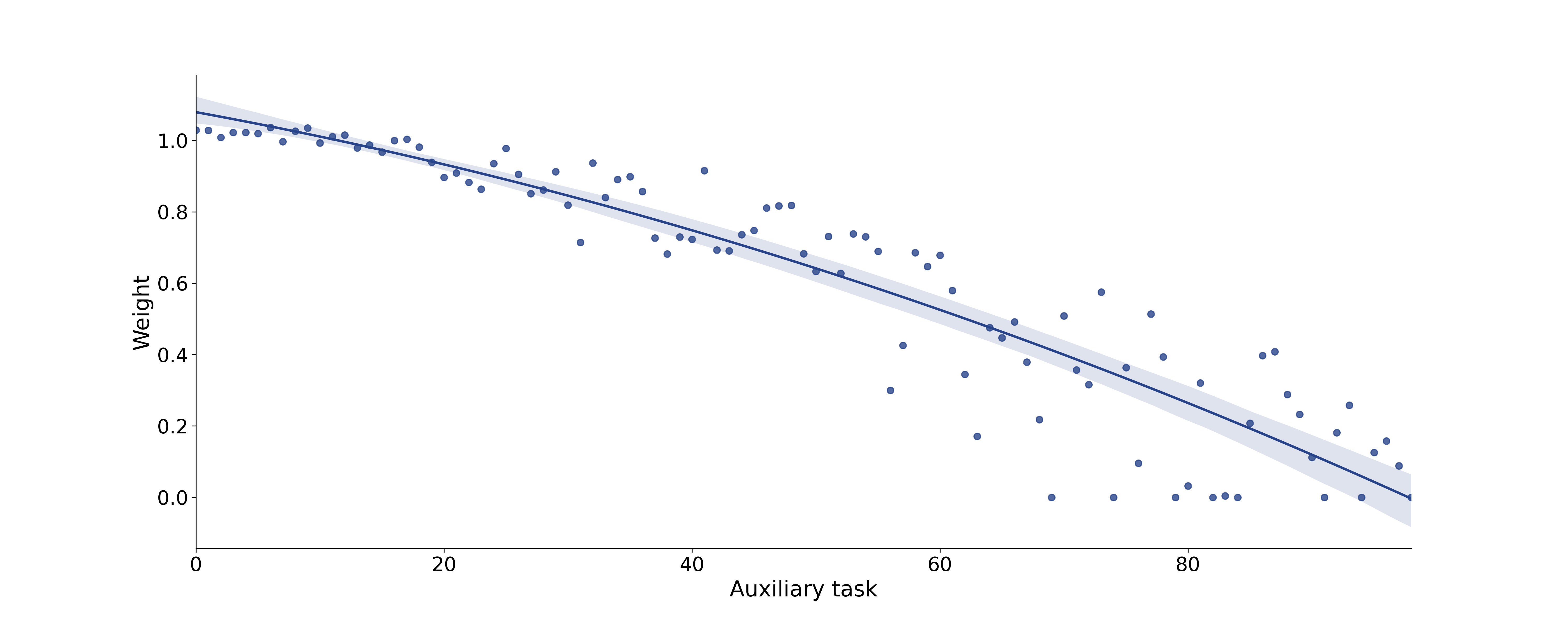}
    \caption{Learning with noisy labels: task ID is proportional to the label noise.}
    \label{fig:noisy_aux}
\end{figure}
% \ignore{
% \subsection{Loss landscape} \label{sec:loss_landscape}

% \begin{figure}[!t]
%     \text{
%         % I'm not proud of this. Any idea how to do it better?
%         ~~~~~~~~~~~~~~~~~~~~~~~~~~~~~~ (a)
%         ~~~~~~~~~~~~~~~~~~~~~~~~~~~~~~~~~~~~~~~~~~ (b)
%         ~~~~~~~~~~~~~~~~~~~~~~~~~~~~~~~~~~~~~~~~~~ (c)
%         }\par%\medskip
%     \centering
%     \includegraphics[width=1.\linewidth]{figures/loss_landscape_with_stl_2.png}
%     \caption{Loss landscape dynamics: changes to the loss landscape caused by the optimization of the \GuideNet{}. Here, $W_{train}$ is the optimal solution over the train set. \textbf{(a)} main task's loss landscape; \textbf{(b)} weighted loss at training start; \textbf{(c)} weighted loss at training end.}
%     \label{fig:loss_landscape}
% \end{figure}

% We wish to observe the changes to the loss landscape throughout the optimization process. Consider a regression problem with $y_{main}=W^{{\star}^T} \xx+\epsilon_0$ and two auxiliary tasks: the first auxiliary is a helpful, $y_1=W^{\star T} \xx + \epsilon_1$, whereas the second auxiliary is harmful $y_{2}=\tilde{W}^T \xx + \epsilon_{2}$. We let $\epsilon_0\sim N(0, \sigma_{main}^2)$ and $\epsilon_j\sim N(0, \sigma_{aux}^2)$ for $j=1,2$. We optimize a linear model with weights $W=(w_1, w_2)$ that are shared across tasks, i.e., no task-specific parameters. We set $W^{\star}=(1, 1)^T$ and $\tilde{W}=(2, -4)^T$. Figure~\ref{fig:loss_landscape} shows the loss landscape at the beginning and end of training.
% }
\subsection{CUB sensitivity analysis} 
\label{sec:cub_analysis}
In this section, we provide further analysis for the experiments conducted on the CUB 200-2011 dataset in the 5-shot setup. We examine the sensitivity of a non-linear \GuideNet{} to the \textbf{size of the \metaVal{}}, and the \textbf{depth of the \GuideNet{}}. In~\figref{fig:cub_meta-val_effect} we test the effect of allocating (labeled) samples from the training set to the \metaVal{}. As seen, allocating between $10-50$ samples results in similar performance picking at 20. The figure shows that removing too many samples from the training set can be damaging. Nevertheless, we notice that even when allocating 200 labeled samples (out of 1000), our nonlinear method is still better than the best competitor GSC \citep{du2018adapting} (which reached an accuracy of $42.57$).

% new paragraph
\figref{fig:cub_num_layers_effect} 
shows how accuracy changes with the number of hidden layers. As expected, there is a positive trend. As we increase the number of layers, the network expressivity increases, and the performance improves. Clearly, making the \GuideNet{} too large may cause the network to overfit the \metaVal{} as was shown in~\secref{sec:analysis}, and empirically in \citep{lorraine2019optimizing}. 

% cub meta val and aux net depth
\begin{figure}[t]
\centering
\subfloat[Effect of \metaVal{} size]{
\includegraphics[width=0.45\linewidth]{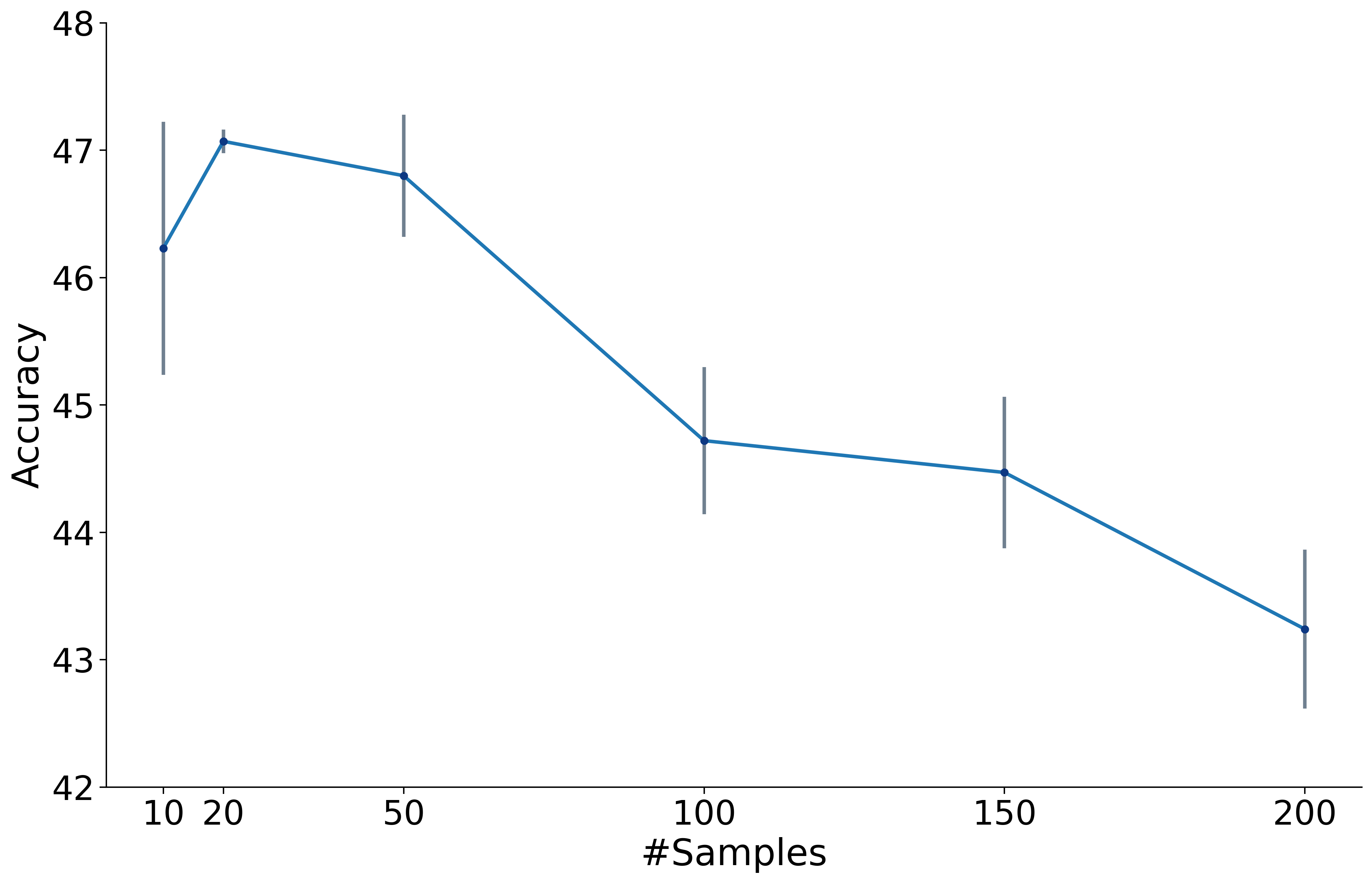}
\label{fig:cub_meta-val_effect}}
\subfloat[Effect of Depth]{
\includegraphics[width=0.45\linewidth]{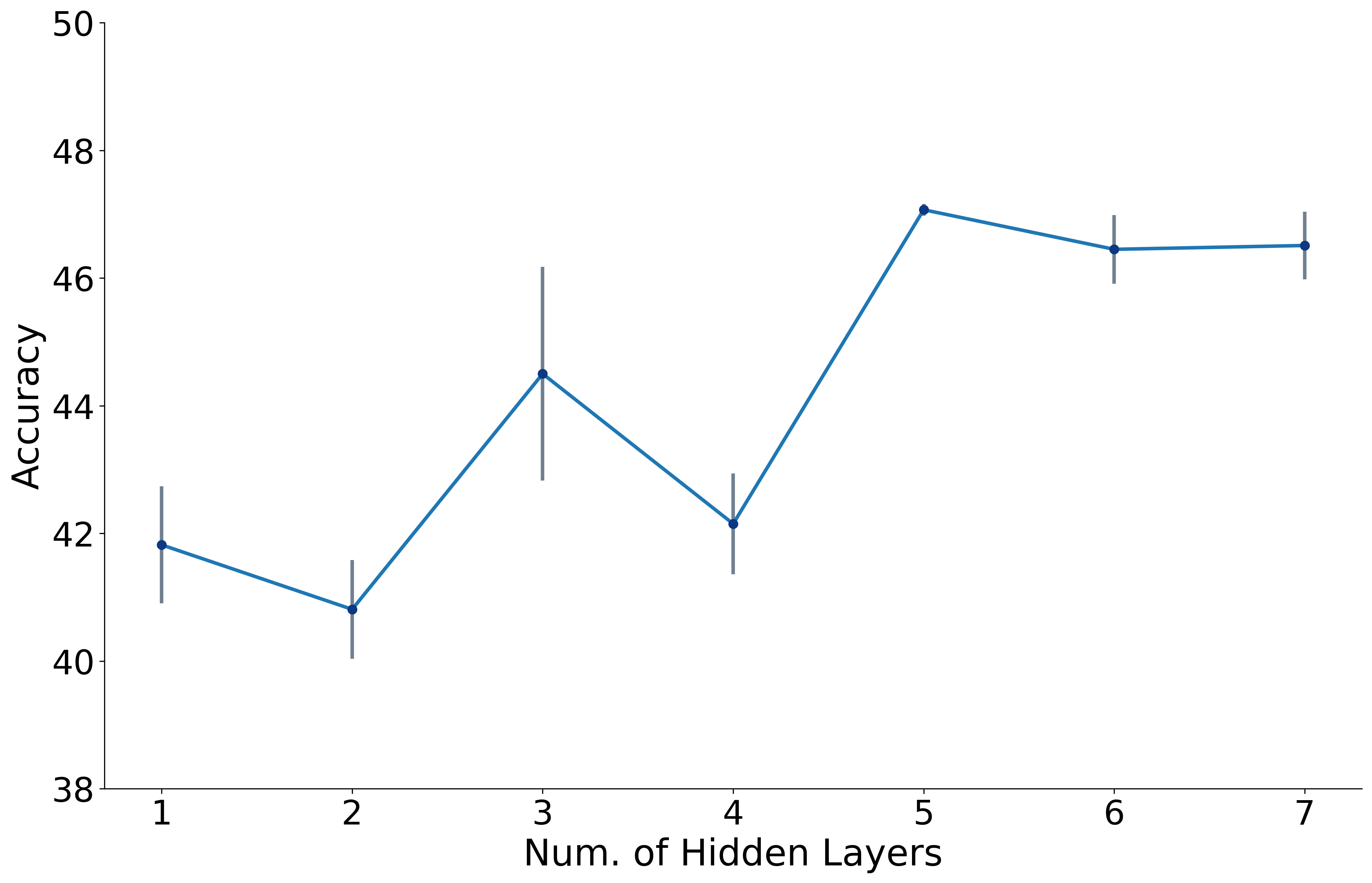}
\label{fig:cub_num_layers_effect}}
\caption{Mean test accuracy ($\pm$ SEM) averaged over 3 runs as a function of the number of samples in the \metaVal{} (left) and the number of hidden layers (right). Results are on 5-shot CUB 200-2011 dataset.}
\label{fig:cub_choices}
\end{figure}

\subsection{Linearly weighted non-linear terms} \label{sec:linear_weighted_poly}
To further motivate the use of non-linear interactions between tasks, we train a linear \GuideNet{} over a polynomial kernel on the tasks segmentation, depth estimation and normal prediction from the NYUv2 dataset. \figref{fig:nyu_poly} shows the learned loss weights. From the figure, we learn that two of the three largest weights at the end of training belong to non-linear terms, specifically, $Seg^2$ and $Seg\cdot Depth$. Also, we observe a \emph{scheduling} effect, in which at the start of training, the \GuideNet{} focuses on the auxiliary tasks (first $\sim50$ steps), and afterwards it draws most of the attention of the primary network towards the main task. 
% We further discuss this scheduling effect in ~\secref{sec:fixed_aux_params}.

\begin{figure}[t]
    % Polynomial Kernel - Linear Weights
    \centering
    \text{Polynomial kernel - linear weights}
    \includegraphics[width=.8\linewidth]{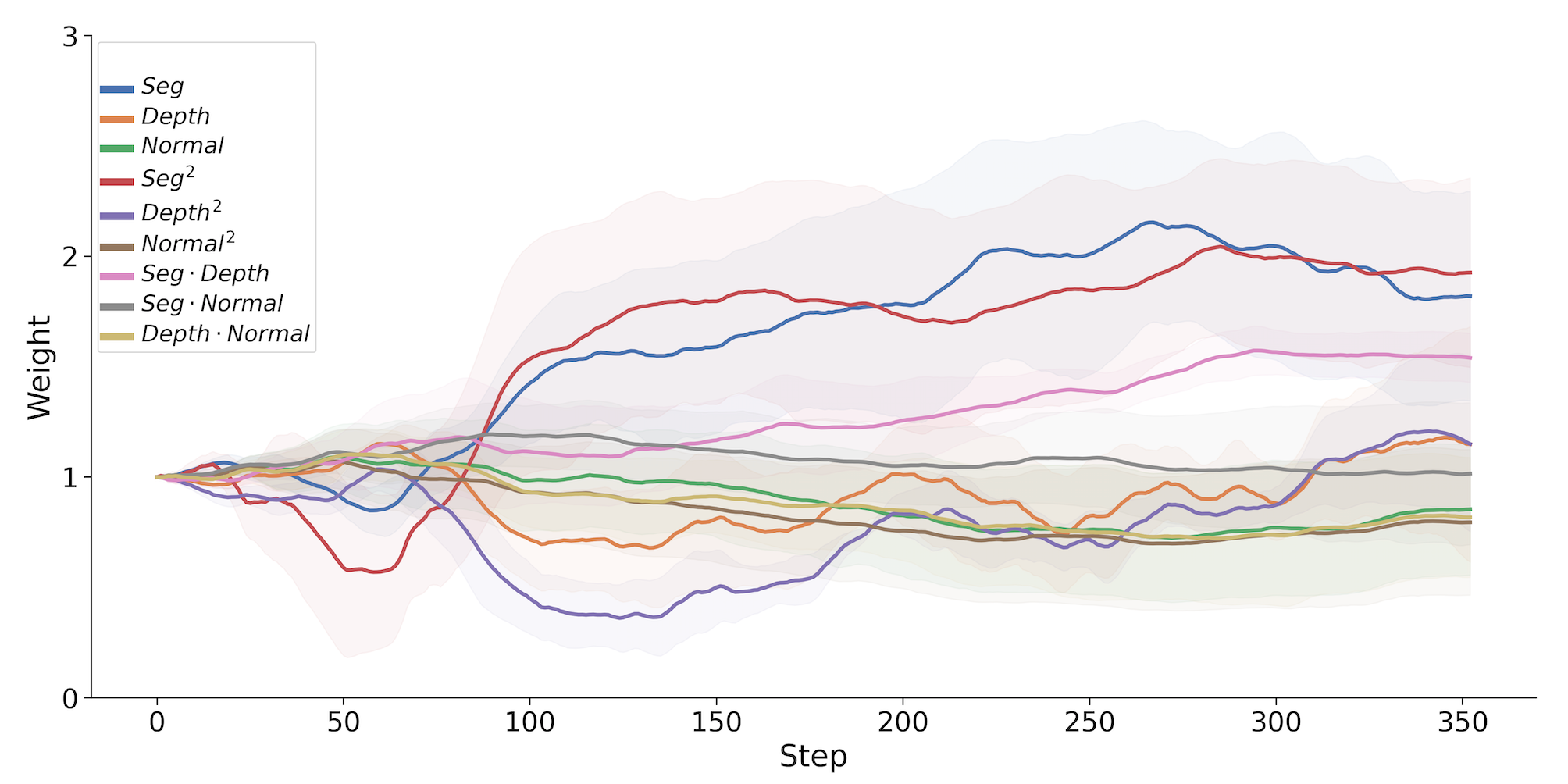}
    \caption{Learned linear weights for a polynomial kernel on the loss terms of the tasks segmentation, depth estimation and normal prediction from the NYUv2 dataset.}
    \label{fig:nyu_poly}
\end{figure}

\subsection{Fixed auxiliary}
\label{sec:fixed_aux_params}
As a result of alternating between optimizing the primary network parameters and the auxiliary parameters, the weighting of the loss terms are updated during the training process. This means that the loss landscape is changed during training. This effect is observed in the illustrative examples described in \secref{sec:illustrative} and \secref{sec:linear_weighted_poly}, where the \GuideNet{} focuses on different tasks during different learning stages. Since the optimization is non-convex, the end result may depend not only on the final parameters but also on the loss landscape during the entire process.

We examined this effect with the following setup on the 5-shot setting on CUB 200-2011 dataset: we trained a non-linear \GuideNet{} and saved the best model. Then we retrain with the same configuration, only this time, the \GuideNet{} is initialized using the best model, and is kept fixed. We repeat this using ten different random seeds, affecting the primary network initialization and data shuffling. As a result, we observed a drop of 6.7\% on average in the model performance with an std of 1.2\% (46.7\% compared to 40\%).

% \ef{To be honest if the best value is at the largest value we checked, in high probability we could get some better results with more layers}\gal{agreed, Aviv, can we test deeper networks?} \ia{On it.}
%\ia{Yes you are right. We noticed this result few days before the submission and because of time constraints and GPU resources it was pushed to the end of the backlog}.

\subsection{Full CUB dataset}
\label{sec:cub_full_data}
In \secref{sec:cub_exp} we evaluated \AuxiLearn{} and the baseline models performance under a semi-supervised scenario in which we have $5$ or $10$ labeled samples per class. For completeness sake, we show in Table \ref{cub_tab:results_all} the test accuracy results in the standard fully-supervised scenario. As can be seen, in this case the STL baseline achieves the highest top-1 test accuracy while our nonlinear method is second on the top-1 and first on the top-3. Most baselines suffer from severe negative transfer due to the large number of auxiliary tasks (which are not needed in this case) while our method cause minimal performance degradation.

\begin{table}[!h]
\caption{CUB 200-2011: Fully supervised test classification accuracy ($\pm$ SEM) averaged over three runs. \newline}
\centering
\scalebox{0.85}{
    \begin{tabular}{l c c}
    \toprule
     & Top 1 & Top 3\\ 
    \midrule
    STL & \textbf{75.2 $\pm$ 0.52} & 88.4 $\pm$ 0.36 \\
    Equal & 70.16 $\pm$ 0.10 & 86.87 $\pm$ 0.22 \\
    Uncertainty & 74.70 $\pm$ 0.56 & 88.21 $\pm$ 0.14 \\
    DWA & 69.88 $\pm$ 0.10 & 86.62 $\pm$ 0.20 \\
    GradNorm & 70.04 $\pm$ 0.21 & 86.63 $\pm$ 0.13 \\
    GSC & 71.30 $\pm$ 0.01 & 86.91 $\pm$ 0.28 \\
    \midrule
    \textbf{\AuxiLearn{} (ours)} \\
    Linear & 70.97$\pm$ 0.31 & 86.92 $\pm$ 0.08 \\
    Deep Linear & 73.6 $\pm$ 0.72 & 88.37 $\pm$ 0.21 \\
    Nonlinear & 74.92 $\pm$ 0.21 & \textbf{88.55 $\pm$ 0.17} \\
    
    \bottomrule
    \end{tabular}}
\label{cub_tab:results_all}
\end{table}

\subsection{Cityscapes}
\label{sec:cityscapes}

Cityscapes~\citep{cordts2016cityscapes} is a high-quality urban-scene dataset. We use the data provided in~\citep{liu2019end} with 2975 training and 500 test images. The data comprises of four learning tasks: 19-classes, 7-classes and 2-classes semantic segmentation, and depth estimation. We use the 19-classes semantic segmentation as the main task, and all other tasks as auxiliaries. We allocate $10\%$ of the training data for validation set, to allow for hyperparameter tuning and early stopping. We further allocate $2.5\%$ of the remaining training examples to construct the \metaVal{}. All images are resized to $128 \times 256$ to speed up computation. 

We train a SegNet~\citep{badrinarayanan2017segnet} based model for $150$ epochs using Adam optimizer~\citep{Kingma2014AdamAM} with learning rate $1e-4$, and halve the learning rate after $100$ epochs. We search over weight decay in $\{1e-4, 1e-5\}$. We compare \AuxiLearn{} to the same baselines used in \secref{sec:cub_exp} and search over the same hyperparameters as in the NYUv2 experiment. We set the DWA temperature to $2$ similar to~\citep{liu2019end}, and the GradNorm hyperparameter $\alpha$ to $1.5$, as used in~\citep{chen2017gradnorm} for the NYUv2 experiments. We present the results in Table~\ref{Tab:cityscapes}. The ConvNet variant of the \GuideNet{} achieves best performance in terms of mIoU and pixel accuracy.

\begin{table}[h]
\centering
\caption{19-classes semantic segmentation test set results on Cityscapes, averaged over three runs ($\pm$ SEM).
}
\scalebox{0.8}{
    \begin{tabular}[h]{l  c  c}
    \toprule
    &mIoU &Pixel acc.\\
    \midrule
    STL & $30.18 \pm 0.04$ & $87.08 \pm 0.18$\\
    Equal & $30.45 \pm 0.14$ & $87.14 \pm 0.08$\\
    Uncertainty & $30.49 \pm 0.21$ & $86.89 \pm 0.07$\\
    DWA & $30.79 \pm 0.32$ & $86.97 \pm 0.26$\\
    GradNorm & $30.62 \pm 0.03$ & $87.15 \pm 0.04$\\
    GCS & $30.32 \pm 0.23$ & $87.02 \pm 0.12$\\
    \midrule
    \textbf{\AuxiLearn{} (ours)} & & \\
    Linear & $30.63 \pm 0.19$ & $86.88 \pm 0.03$\\
    Nonlinear & $30.85 \pm 0.19$ &
    $87.19 \pm 0.20$\\
    ConvNet & $\mathbf{30.99 \pm 0.05}$ & $\mathbf{87.21 \pm 0.11}$\\
    \bottomrule
    \end{tabular}}
\label{Tab:cityscapes}
\end{table}
%%%% Alg %%%%%%

\ignore{
\subsection{T-SNE of learned auxiliaries}
\label{sec:tsne}
\secref{sec:learning_cls_exp} of the main text shows how \AuxiLearn{} can learn useful auxiliary tasks for the main task of interest using its training data alone. 
This is achieved by learning to assign labels to samples. 
Here we further examine the labels learned by \AuxiLearn{} in that setting.  

\figref{fig:cifar10_labels} presents a 2D t-SNE projection of the learned soft labels for two classes of the CIFAR10 dataset, \textit{Deer} and \textit{Frog}. A clear structure in the label space is visible. The \GuideNet{} learns a finer partition of the \textit{Frog} class, separating real images and illustrations. The middle labels learned for \textit{Deer} are more interesting, as it appears the \GuideNet{} captures more complex features, rather than relying on background colors alone. This region in the label space contains deer with antlers in various poses and varying backgrounds.

% For example, the deer on the left side don't have antlers and those on the right side are snow deer. \ia{Aviv, please verify}

\begin{figure}[!ht]
    \text{
        % I'm not proud of this. Any idea how to do it better?
        ~~~~~~~~~~~~~~~~~~~~~~~~~~~~~~~~~ Frog
         ~~~~~~~~~~~~~~~~~~~~~~~~~~~~~~
         ~~~~~~~~~~~~~~~~~~~~~~~~~~~~~~~~~ Deer
        }\par%\medskip
    \centering
    \includegraphics[width=0.8\linewidth]{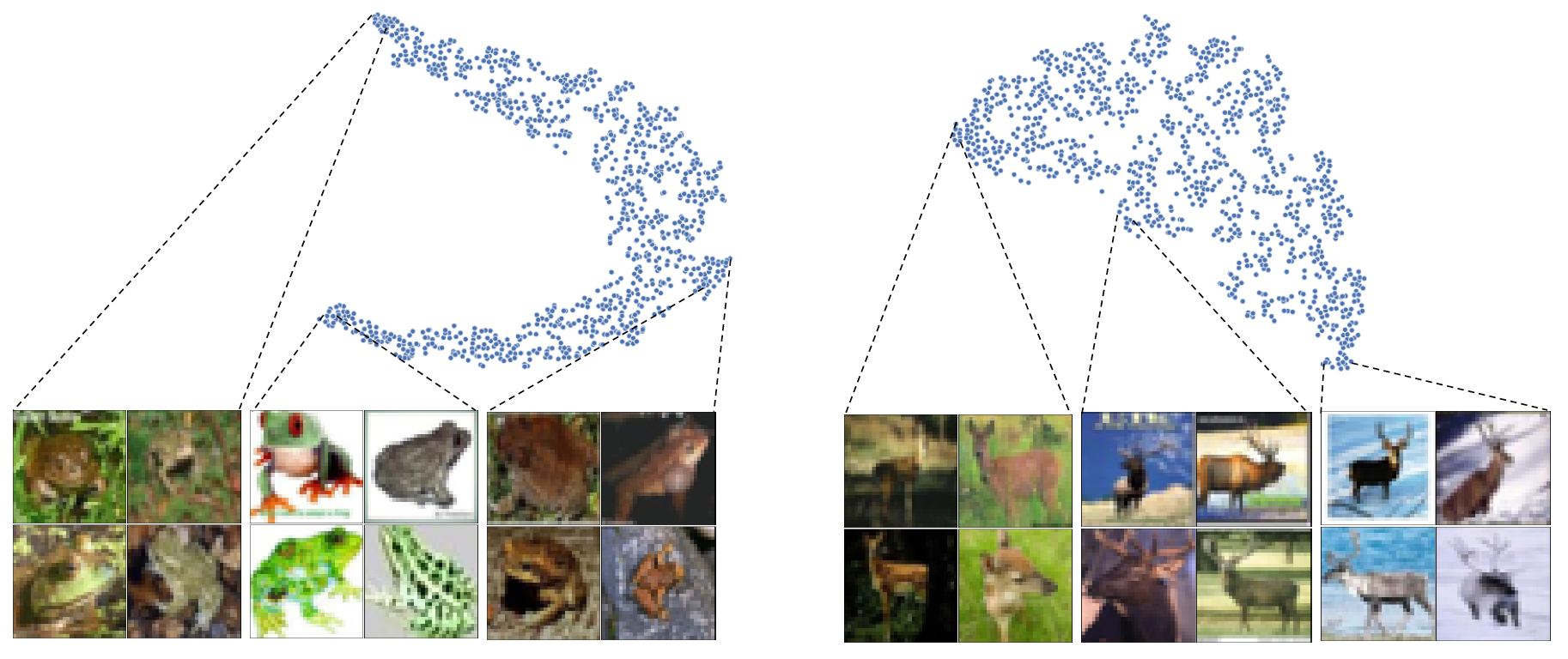}
    \caption{t-SNE applied to the auxiliaries learned for the \textit{Frog} and \textit{Deer} classes, in CIFAR10.}
    \label{fig:cifar10_labels}
\end{figure}
}
% \begin{figure}[!h]
%     \text{
%         % I'm not proud of this. Any idea how to do it better?
%         ~~~~~~~~~~~~~~~~~~~~~~~~~~~~~~~~~ Frog
%          ~~~~~~~~~~~~~~~~~~~~~~~~~~~~~~
%          ~~~~~~~~~~~~~~~~~~~~~~~~~~~~~~~~~ Deer
%         }\par%\medskip
%     \centering
%     \includegraphics[width=1.\linewidth]{figures/cifar10_labels_new.png}
%     \caption{t-SNE applied to the auxiliaries learned for the \textit{Frog} and \textit{Deer} classes, in CIFAR10.\an{I prefer the other one, WDYT?}\gal{I slightly prefer the lower one, but OK with both. Your choice. }}
%     \label{fig:cifar10_labels_opt2}
% \end{figure}

\subsection{Learning segmentation auxiliary for 3D point clouds}
\label{sec:pc}
Recently, several methods were offered for learning auxiliary tasks in point clouds \citep{achituve2020self, hassani2019unsupervised, sauder2019self}; however, this domain is still largely unexplored and it is not yet clear which auxiliary tasks could be beneficial beforehand. Therefore, it is desirable to automate this process, even at the cost of performance degradation to some extent compared to human designed methods. 

We further evaluate our method in the task of generating helpful auxiliary tasks for 3D point-cloud data. We propose to extend the use of \AuxiLearn{} for segmentation tasks. In \secref{sec:learning_cls_exp} we trained an \GuideNet{} to output soft auxiliary labels for classification task. Here, we use a similar approach, assigning a soft label vector to each point. We then train the primary network on the main task and the auxiliary task of segmenting each point based on the learned labels.

%As in the usual setting we have an auxiliary network parametrized by $\phi$ only that this time it learns a soft labeling per each point. The primary network is then trained on the main task and on the auxiliary of predicting the per-point labels. 
%\ef{Not clear what the auxilary network is accutually doing (and implementation details)}

%We propose two variants of our method: \textbf{(1) Learning deformations} inspired by \citep{achituve2020self, hassani2019unsupervised}, a network parametrized by $\phi$ that learns how to deform the input point-cloud for the primary network. The primary network is then trained on the main task and on an auxiliary task of reconstructing the input of the complete (undeformed) shape. \textbf{(2) 

We evaluated the above approach in a part-segmentation task using the ShapeNet part dataset \citep{yi2016scalable}. This dataset contains 16,881 3D shapes from 16 object categories (including Airplane, Bag, Lamp), annotated with a total of 50 parts (at most 6 parts per object). The main task is to predict a part label for each point. We follow the official train/val/test split scheme in \citep{chang2015shapenet}. We also follow the standard experimental setup in the literature, which assumes known object category labels during segmentation of a shape (see e.g., \citep{qi2017pointnet, wang2019dynamic}). During training we uniformly sample 1024 points from each shape and we ignore points normal. During evaluation we use all points of a shape. For all methods (ours and baselines) we used the DGCNN architecture \citep{wang2019dynamic} as the backbone feature extractor and for part segmentation. We evaluated performance using point-Intersection over Union (IoU) following \citep{qi2017pointnet}.

We compared AuxiLearn with the following baselines: \textbf{(1) Single Task Learning (STL):} Training with the main task only. \textbf{(2) DefRec:} An auxiliary task of reconstructing a shape with a deformed region \citep{achituve2020self}. \textbf{(3) Reconstructing Spaces (RS):} An auxiliary task of reconstructing a shape from a shuffled version of it \citep{sauder2019self}. and \textbf{(4) Denoising Auto-encoder (DAE):} An auxiliary task of reconstructing a point-cloud perturbed with an iid noise from $\mathcal{N}(0,0.01)$.

We performed hyper-parameter search over the primary network learning rate in $\{1e-3, 1e-4\}$, weight decay in $\{5e-5, 1e-5\}$ and weight ratio between the main and auxiliary task of $\{1:1, 1:0.5, 1:0.25\}$. We trained each method for 150 epochs, used the Adam optimizer with cosine scheduler. We applied early stopping based on the mean IoU of the validation set. We ran each configuration with 3 different seeds and report the average mean IOU along with the SEM. 
%For the DAE baseline we also searched over $\epsilon \in \{0.01, 0.1, 0.3, 0.5\}$ and found $0.1$ to yield the best results. % For fair comparison, the auxiliary network architecture was set the same in all methods (baselines and ours).
We used the segmentation network proposed in \citep{wang2019dynamic} with an exception that the network wasn't supplied with the object label as input.

\begin{table}[!t]
\centering
\tiny
\caption{Learning auxiliary segmentation task. Test mean IOU on ShapeNet part dataset averaged over three runs ($\pm$SEM) - 30 shot}
\setlength\tabcolsep{2.2pt}
\begin{tabular}{l c ccccccccccccccccc}
\toprule
& Mean && Airplane & Bag & Cap & Car & Chair & Earphone & Guitar & Knife & Lamp & Laptop & Motorbike & Mug & Pistol & Rocket & Skateboard & Table \\
\cmidrule{1-2}\cmidrule{4-19}%\midrule
Num. samples & 2874 && 341 & 14 & 11 & 158 & 704 & 14 & 159 & 80 & 286 & 83 & 51 & 38 & 44 & 12 & 31 & 848 \\
\cmidrule{1-2}\cmidrule{4-19}%\midrule
STL & 75.6 && 68.7 & \textbf{82.9} & 85.2 & \textbf{65.6} & 82.3 & 70.2 & 86.1 & 75.1 & 68.4 & 94.3 & 55.1 & 91.0 & 72.6 & 60.2 & 72.3 & 74.2 \\
% \hline
DAE & 74.0 && 66.6 & 77.6 & 79.1 & 60.5 & 81.2 & \textbf{73.8} & 87.1 & 77.0 & 65.4 & 93.6 & 51.8 & 88.4 & \textbf{74.0} & 55.4 & 68.4 & 72.7 \\
DefRec  & 74.6 && 68.6 & 81.2 & 83.8 & 63.6 & 82.1 & 72.9 & 86.9 & 72.7 & 69.4 & 93.4 & 51.8 & 89.7 & 72.0 & 57.2 & 70.5 & 71.7 \\
RS  & \textbf{76.5} && \textbf{69.7} & 79.1 & \textbf{85.9} & 64.9 & \textbf{83.8} & 68.4 & 82.8 & 79.4 & \textbf{70.7} & 94.5 & \textbf{58.9} & 91.8 & 72.0 & 53.4 & 70.3 & \textbf{75.0} \\
\cmidrule{1-2}\cmidrule{4-19}%\midrule
AuxiLearn & 76.2 && 68.9 & 78.3 & 83.6 & 64.9 & 83.4 & 69.7 & \textbf{87.4} & \textbf{80.7} & 68.3 & \textbf{94.6} & 53.2 & \textbf{92.1} & 73.7 & \textbf{61.6} & \textbf{72.4} & 74.6 \\
\bottomrule
\end{tabular}
\label{table:pc_seg}
\end{table}

For \AuxiLearn{}, we used a smaller version of PointNet \citep{qi2017pointnet} as the auxiliary network without input and feature transform layers. We selected PointNet because its model complexity is light and therefore is a good fit in our case. We learned a different auxiliary task per each object category (with 6 classes per category) since it showed better results. We performed hyper-parameter search over the auxiliary network learning rate in $\{1e-2, 1e-3\}$, weight decay in $\{5e-3, 5e-4\}$. Two training samples from each class were allocated for the \metaVal. 

Table \ref{table:pc_seg} shows the mean IOU per category when training with only 30 segmented point-clouds per object category (total of 480). As can be seen, \AuxiLearn{} performance is close to RS \citep{sauder2019self} and improve upon other baselines. This shows that in this case, our method generates useful auxiliary tasks that has shown similar or better gain than those designed by humans.

\subsection{Learning an auxiliary classifier} \label{sec:learning_aux_furter}
% \begin{wraptable}[9]{r}{0.45\linewidth}
% \vspace{-12pt}
% \setlength{\tabcolsep}{3pt}
% \scriptsize
% %\begin{table}[ht]
% \centering
% %\setlength{\tabcolsep}{2pt}
% %\begin{table}[!h]\scriptsize
% %\centering
% \caption{Learning auxiliary task. Test accuracy averaged over three runs ($\pm$SEM) - 15 shot}
% %\scalebox{0.9}{
% \begin{tabular}[!h]{l cc  cc}
%     \toprule
%     &\multicolumn{1}{c}{CUB} & &\multicolumn{1}{c}{Pet}\\
%     \midrule
%     STL & $22.6 \pm 0.2$ & &  $13.6 \pm 0.7$ \\
%     MAXL-F & $24.2 \pm 0.7$ & & $14.1 \pm 0.1$  \\
%     MAXL & $24.2 \pm 0.8$ & & $14.2 \pm 0.2$ \\
%     \midrule
%     \textbf{\AuxiLearn{}} & $\mathbf{26.1 \pm 0.7}$
%     && $\mathbf{18.0 \pm 0.9}$ \\
%     \bottomrule
% \end{tabular}%}
% \label{Tab:learnable_task_15shot}
% %\end{table}
% \end{wraptable}

\begin{table}[h]
\parbox{.45\linewidth}{
% \vspace{-12pt}
\setlength{\tabcolsep}{3pt}
\scriptsize
\centering

\caption{Learning auxiliary task. Test accuracy averaged over three runs ($\pm$SEM) - 15 shot}
%\scalebox{0.9}{

\begin{tabular}[!h]{l cc  cc}
    \toprule
    &\multicolumn{1}{c}{CUB} & &\multicolumn{1}{c}{Pet}\\
    \midrule
    STL & $22.6 \pm 0.2$ & &  $13.6 \pm 0.7$ \\
    MAXL-F & $24.2 \pm 0.7$ & & $14.1 \pm 0.1$  \\
    MAXL & $24.2 \pm 0.8$ & & $14.2 \pm 0.2$ \\
    \midrule
    \textbf{\AuxiLearn{}} & $\mathbf{26.1 \pm 0.7}$
    && $\mathbf{18.0 \pm 0.9}$ \\
    \bottomrule
\end{tabular}%}
\label{Tab:learnable_task_15shot}
%\end{table}
}
\hfill
\parbox{.5\linewidth}{
    % \vspace{-9pt}
    \centering
    \scriptsize
    \setlength{\tabcolsep}{3pt}
    \caption{%Learning auxiliary task with varying number of samples. 
    CIFAR10 test results accuracy averaged over three runs ($\pm$SEM).}
    \centering
    \begin{tabular}{l cc cc cc}
    \toprule
     & \multicolumn{3}{c}{
     CIFAR10 
     %\% Train 
     } \\
     \cmidrule{2-4}\\
    & $10\%$ & $15\%$ & $100\%$ \\
    \midrule
    STL & $72.63 \pm 2.14$ & $80.30 \pm 0.09$ & $93.36 \pm 0.05$\\
    % MAXL-F &  &  & \\
    MAXL & $75.85 \pm 0.32$ & $81.37 \pm 0.26$ & $93.49 \pm 0.02$ \\
    \midrule
    \textbf{AuxiLearn} & $\mathbf{76.75 \pm 0.08}$ & $\mathbf{81.42 \pm 0.30}$ & $\mathbf{93.54 \pm 0.05}$ \\
    \bottomrule
    \end{tabular}
    \label{tab:cifar_extra}
}
\end{table}

% \begin{table}[h]

% \vspace{-12pt}
% \setlength{\tabcolsep}{3pt}
% \scriptsize
% %\begin{table}[ht]
% \centering
% %\setlength{\tabcolsep}{2pt}
% %\begin{table}[!h]\scriptsize
% %\centering
% \caption{Learning auxiliary task. Test accuracy averaged over three runs ($\pm$SEM) - 15 shot}
% %\scalebox{0.9}{

% \begin{tabular}[!h]{l cc  cc}
%     \toprule
%     &\multicolumn{1}{c}{CUB} & &\multicolumn{1}{c}{Pet}\\
%     \midrule
%     STL & $22.6 \pm 0.2$ & &  $13.6 \pm 0.7$ \\
%     MAXL-F & $24.2 \pm 0.7$ & & $14.1 \pm 0.1$  \\
%     MAXL & $24.2 \pm 0.8$ & & $14.2 \pm 0.2$ \\
%     \midrule
%     \textbf{\AuxiLearn{}} & $\mathbf{26.1 \pm 0.7}$
%     && $\mathbf{18.0 \pm 0.9}$ \\
%     \bottomrule
% \end{tabular}%}
% \label{Tab:learnable_task_15shot}
% %\end{table}
% \end{table}

In Section~\ref{sec:learning_cls_exp} we show how AuxiLearn learns a novel auxiliary to improve upon baseline methods. For the fine-grained classification experiments, we use only $30$ samples per class. Here we also compare AuxiLearn with the baseline methods when there are only $15$ images per class. Table ~\ref{Tab:learnable_task_15shot} shows that AuxiLearn is superior to baseline methods in this setup as well, even though it requires to allocate some samples from the training data to the auxiliary set.

\iclrrevision{To further examine the effect of learning novel auxiliary task with varying train set size, we provide here additional experiments on the CIFAR10 dataset. We evaluate the methods with of 10\%, 15\% and 100\% of training examples. The results are presented in Table~\ref{tab:cifar_extra}. As expected, learning with auxiliaries is mostly helpful in the low data regime. Nonetheless, AuxiLearn improves over single task learning and MAXL for all training set sizes.}

\section{Theoretical Considerations} \label{theoretical_cons}

In this section, we discuss the theoretical limitations of AuxiLearn. First, we discuss the smoothness of our loss criterion while learning to combine losses using DNNs. Next,
we present limitations that may arise from utilizing the IFT and their resolution. Finally, we discuss the approximations made for achieving an efficient optimization procedure.

\paragraph{Smoothness of the loss criterion.} When learning to combine losses as described in Section~\ref{sec:combine_losses}, one must take into consideration the smoothness of the learn loss criterion as a function of $W$. This limits, at least in theory, the design choice of the auxiliary network. In our experiments we use smooth activation functions, namely Softplus, to ensure the existence of $\partial{\mathcal{L}_T}/\partial{W}$. Nonetheless, using non-smooth activation (e.g. ReLU) results with a piecewise smooth loss function hence might work well in practice.

\paragraph{Assumptions for IFT.} One assumption for applying the IFT as described in Section~\ref{sec:unified_opt}, is that $\mathcal{L}_T$ is continuously differentiable w.r.t to the auxiliary and primary parameters. This assumption limits the design choice of both the auxiliary, and the primary networks. For instance, one must utilize only smooth activation functions. However, many non-smooth components can be replaced with smooth counterparts. For example, ReLU can be replaced with Softplus, $ReLU(x)=\lim_{\alpha\to\infty}\ln{(1+\exp(\alpha x))/\alpha}$,
%We replace Batch-Normalization (BN) with Weight-Normalization (WN)~\citep{salimans2016weight}. WN suggested as a possible replacements for BN layers.
and the beneficial effects of Batch-Normalization can be captured with Weight-Normalization as argued in~\citep{salimans2016weight}.

For the setup of \textit{learning to combine losses}, we use the above substitutes, namely Softplus and Weight Normalization, however for the \textit{learning a novel auxiliary} setup, we share architecture between primary and auxiliary network (e.g. ResNet18). While using non-smooth components may, in theory, cause issues, we show empirically through extensive experiment that AuxiLean performs well in practice, and its optimization is stable. Furthermore, we note that while ReLUs are non-smooth, they are piecewise smooth, hence the set of non-smoothness points is a zero-measure set.

\paragraph{Approximations.} Our optimization procedure relies on several approximations to efficiently solve complex bi-level optimization. This trade-off between computation efficiency and accurate approximation can be controlled by (i) The number of Neumann series components, and; (ii) The number of optimization steps between auxiliary parameters update. While we cannot guarantee that the bi-level optimization process converges, empirically we observe a stable optimization process.

Our work builds on previous studies in the field of hyperparameter optimization~\citep{lorraine2019optimizing,pedregosa2016hyperparameter}. \cite{lorraine2019optimizing} provide an error analysis for both approximations, in a setup for which the exact Hessian can be evaluated in closed form. We refer the readers to~\cite{pedregosa2016hyperparameter} for theoretical analysis and results regarding the second approximation (i.e. sub-optimally of the inner optimization problem in Eq.~\ref{eq:bi_level}).

\end{document}